\theoremstyle{plain}
\newtheorem{thm}{Theorem}
\theoremstyle{plain}
\newtheorem{lem}[thm]{Lemma}
\begin{document}

\title{Computing the quality of the Laplace approximation}

\author{Guillaume Dehaene\\
Ecole Polytechnique Federale de Lausanne\\
\texttt{guillaume.dehaene@gmail.com}}
\maketitle
\begin{abstract}
Bayesian inference requires approximation methods to become computable,
but for most of them it is impossible to quantify how close the approximation
is to the true posterior. In this work, we present a theorem upper-bounding
the KL divergence between a log-concave target density $f\left(\boldsymbol{\theta}\right)$
and its Laplace approximation $g\left(\boldsymbol{\theta}\right)$.
The bound we present is computable: on the classical logistic regression
model, we find our bound to be almost exact as long as the dimensionality
of the parameter space is high.

The approach we followed in this work can be extended to other Gaussian
approximations, as we will do in an extended version of this work,
to be submitted to the Annals of Statistics. It will then become a
critical tool for characterizing whether, for a given problem, a given
Gaussian approximation is suitable, or whether a more precise alternative
method should be used instead.
\end{abstract}
Bayesian inference requires the following challenging computations:
given an unnormalized density $\tilde{f}\left(\boldsymbol{\theta}\right)=\exp\left(-\phi_{f}\left(\boldsymbol{\theta}\right)\right)$,
we must compute its integral $Z$ and then compute various expected
values under the normalized density $f=\tilde{f}/Z$. One possible
approximation for these computations consists in computing the Laplace
approximation of $\tilde{f}$ (\citet{bishop2006pattern}). This consists
of finding the maximum $\boldsymbol{\theta}^{\star}$ of $\tilde{f}\left(\boldsymbol{\theta}\right)$
and using the following Gaussian approximation:
\begin{equation}
\tilde{f}\left(\boldsymbol{\theta}\right)\approx\tilde{g}\left(\boldsymbol{\theta}\right)=\tilde{f}\left(0\right)\exp\bigg(-\underbrace{\frac{1}{2}\left(\boldsymbol{\theta}-\boldsymbol{\theta}^{\star}\right)H\phi_{f}\left(\boldsymbol{\theta}^{\star}\right)\left(\boldsymbol{\theta}-\boldsymbol{\theta}^{\star}\right)}_{=\phi_{g}\left(\boldsymbol{\theta}\right)}\bigg)\label{eq: the laplace approximation}
\end{equation}
where $H\phi_{f}$ is the \emph{Hessian }matrix of the second derivatives
of $\phi_{f}$.

There exists a theorem that justifies the use of this Laplace approximation:
the Bernstein-von Mises theorem (BvM; \citet{kleijn2012bernstein}).
This theorem is derived from a frequentist analysis: we treat the
data as random with some fixed probability distribution. The posterior
$f\left(\boldsymbol{\theta}\right)$ is then a function-valued random
variable which, under fairly general assumptions, becomes approximately
equal to the second function-valued random variable $g\left(\boldsymbol{\theta}\right)$
in the large-data limit.

This theorem mostly offers only a qualitative reassurance to the computationally
minded, as the assumptions are hard to check and involve inaccessible
terms. This is because current statements of BvM theorems are heavily
tailored towards showing that Bayesian inference is a valid frequentist
method and, namely, that it coincides with Maximum Likelihood Estimation
in the large-data limit. The object of this article is to instead
give a Bernstein-von Mises theorem that is aimed at characterizing
how good the Laplace approximation of one given $\tilde{f}\left(\boldsymbol{\theta}\right)$,
with no assumptions on how $\tilde{f}\left(\boldsymbol{\theta}\right)$
was generated, and while involving only quantities that are computable
from $\tilde{f}\left(\boldsymbol{\theta}\right)$.

\section{A deterministic Bernstein-von Mises theorem}

\paragraph{Assumptions.}

In order to derive our theorem, we must make some assumptions on the
target density $\tilde{f}\left(\boldsymbol{\theta}\right)$. In practice,
we use two which play very different roles:
\begin{enumerate}
\item A \emph{local} assumption constraining the higher-derivatives of $\phi_{f}\left(\boldsymbol{\theta}\right)$.\\
Indeed, the Laplace approximation is computed from a second-degree
Taylor expansion of $\phi_{f}\left(\boldsymbol{\theta}\right)$. In
order for this to be valid, the derivatives of $\phi_{f}\left(\boldsymbol{\theta}\right)$
need to be small.
\item A \emph{global} assumption, constraining the overall shape of $\tilde{f}\left(\boldsymbol{\theta}\right)$
to ensure that most of its mass resides in a close neighborhood of
$\boldsymbol{\theta}^{\star}$.\\
The object of this assumption is to keep us safe from trivial counter-examples
such as the following:
\begin{equation}
\tilde{f}\left(\theta\right)=\exp\left(-\frac{\theta^{2}}{2}\right)+\epsilon\exp\left(-\epsilon^{2}\frac{\theta^{2}}{2}\right)\label{eq: a trivial counter-example}
\end{equation}
If $\epsilon$ is small, the second term is virtually invisible and
the Laplace approximation appears to be good, but it is actually terrible
since the second term contributes one-half of the overall mass of
$\tilde{f}$.
\end{enumerate}
Some shape assumptions actually make stating a BvM theorem straightforward.
For example, if we are willing to assume that $\tilde{f}\left(\boldsymbol{\theta}\right)$
is $\beta$-strongly log-concave, i.e: $H\phi_{f}\left(\boldsymbol{\theta}\right)\geq\beta$,
we can then apply the log-Sobolev inequality (LSI; \citet{otto2000generalization}
Theorem 2) which upper-bounds the Kullback-Leibler divergence between
the normalized densities $g$ and $f$:
\begin{equation}
KL\left(g,f\right)\leq\frac{1}{2}E_{g}\left[\left(\nabla\phi_{f}\left(\boldsymbol{\theta}\right)-\nabla\phi_{g}\left(\boldsymbol{\theta}\right)\right)^{T}\beta^{-1}\left(\nabla\phi_{f}\left(\boldsymbol{\theta}\right)-\nabla\phi_{g}\left(\boldsymbol{\theta}\right)\right)\right]\label{eq: log-sobolev inequality}
\end{equation}
By performing a Taylor expansion of $\nabla\phi_{f}\left(\boldsymbol{\theta}\right)$
around $\boldsymbol{\theta}^{\star}$ and computing various moments
of $g$, we would obtain a complicated but computable upper-bound
for the distance between $g$ and $f$, dominated by the third derivative
of $\phi_{f}$. However, this inequality is almost as useless as it
is easy to derive because most models never lead to strongly log-concave
posterior distributions.

A more realistic assumption consists in assuming that $\tilde{f}\left(\boldsymbol{\theta}\right)$
is log-concave, i.e: $H\phi_{f}\left(\boldsymbol{\theta}\right)>0$
(see \citet{saumard2014log} for a review of the properties of log-concave
densities). This assumption holds for any model such that the log-prior
and the log-likelihood are both concave, e.g: logistic regression
with a Gaussian prior. \textbf{This is our global}\textbf{\emph{ }}\textbf{assumption}
which ensures that $\tilde{f}\left(\boldsymbol{\theta}\right)$ has
a single mode, and that its tails decay at least exponentially thus
guaranteeing that most of its mass is in a close neighborhood of $\boldsymbol{\theta}^{\star}$.

\paragraph{Strategy.}

We will only sketch our proof here and refer the interested reader
to our appendix for the detailed proof and the full expression of
our KL bound.

The key step is the following change of variable:
\begin{align*}
\boldsymbol{\theta} & \rightarrow\left(z,\boldsymbol{e}\right)\in\mathbb{R}*S^{d-1}\\
\boldsymbol{\theta} & =\boldsymbol{\theta}^{\star}+z^{2}\left(H\phi_{f}\left(\boldsymbol{\theta}^{\star}\right)\right)^{-1/2}\boldsymbol{e}
\end{align*}
where $S^{d-1}$ is the $d$-dimensional unit sphere. This change
of variable is such that the conditional density of the random variable
$z|\boldsymbol{e}$ is strongly log-concave, thus enabling us to apply
the LSI. We are then able to control the KL divergence between the
random variable $z|\boldsymbol{e}$ under the approximate model $g$
and the true model $f$. Denoting $\psi_{f,\boldsymbol{e}}\left(z\right)$
the negative log-density of $z|\boldsymbol{e}$ under $f$, we have:
\begin{equation}
KL\left(z_{g}|\boldsymbol{e},z_{f}|\boldsymbol{e}\right)\leq\frac{E_{g}\left[\left(\psi_{f,\boldsymbol{e}}^{'}\left(z\right)-2z^{3}\right)^{2}\right]}{\min_{z}\left[\psi_{f,\boldsymbol{e}}^{''}\left(z\right)\right]}
\end{equation}

From this, we can approximate the density under $f$ of the random
variable $\boldsymbol{e}$, which requires an intractable marginalization
of the variable $z$. Since we have an upper-bound of the KL divergence,
we can use the Evidence Lower-Bound (\citet{murphy2012machine} chapter
21) to approximate the marginalization and have an upper-bound on
the error:
\begin{equation}
\log\left[f\left(\boldsymbol{e}\right)\right]\approx C+E_{g}\left[\frac{z^{4}}{2}-\psi_{f,\boldsymbol{e}}\left(z\right)\right]
\end{equation}
We can finally turn this approximation of the density of $\boldsymbol{e}$
under model into an approximation of the KL divergence between $g$
and $f$ over this variable, as the variance of the ELBO as we sample
$\boldsymbol{e}$ from the density $g$:
\begin{equation}
KL\left(\boldsymbol{e}_{g},\boldsymbol{e}_{f}\right)\approx\text{var}\left[\boldsymbol{e}_{g}\rightarrow\left(E_{g}\left[\frac{z^{4}}{2}-\psi_{f,\boldsymbol{e}_{g}}\left(z\right)\right]\right)\right]
\end{equation}

The total KL divergence between $g$ and $f$ is then found as the
sum of the KL divergence caused by the $\boldsymbol{e}$ random-variable
and the mean KL divergence caused by the conditional $z|\boldsymbol{e}$:
\begin{equation}
KL\left(g,f\right)=KL\left(\boldsymbol{e}_{g},\boldsymbol{e}_{f}\right)+E\left[KL\left(z_{g}|\boldsymbol{e}_{g},z_{f}|\boldsymbol{e}_{g}\right)\right]
\end{equation}

\paragraph{Measuring the derivatives.}

In our theorem, two key quantities control the size of the KL divergence
between approximation and truth:
\begin{enumerate}
\item The KL divergence caused by the conditionals $KL\left(z_{g}|\boldsymbol{e}_{g},z_{f}|\boldsymbol{e}_{g}\right)$
is controlled by a LSI.
\item The KL divergence caused by $\boldsymbol{e}$ is controlled by approximating
the density $f\left(\boldsymbol{e}\right)$ which is done through
an Evidence Lower-Bound approximation.
\end{enumerate}
In both cases, we need to deal with expected values of differences
of the log-densities: $\phi_{f}-\phi_{g}$, where $\phi_{g}$ is a
Taylor expansion of $\phi_{f}$ to second-order. Thus, if we measure
the strength of the higher-derivatives of $\phi_{f}$, we can use
it to deduce the size of the difference: $\phi_{f}-\phi_{g}$, and
to control the KL divergence. Similarly, by controlling the higher-derivatives,
we are able to bound the minimum curvature of $z|\boldsymbol{e}$
under density $f$: $\min_{z}\left[\psi_{f,\boldsymbol{e}}^{''}\left(z\right)\right]$.

Furthermore, notice that the derivatives of $\phi_{f}$ only matter
once we fix a direction $\boldsymbol{e}$. Thus, we only need to measure
the size of the derivatives of $\phi_{f}\left(\boldsymbol{\theta}\right)$
along lines which go through $\boldsymbol{\theta}^{\star}$. In this
article, we will consider only derivatives up to fourth order, but
our result could be extended to involve higher-derivatives to any
arbitrary order. We define the following quantities:\begin{subequations}
\begin{align}
\Delta_{3}\left(\boldsymbol{e}\right) & =\frac{\partial^{3}}{\partial r^{3}}\left[\phi_{f}\left(\boldsymbol{\theta}^{\star}+r\left(H\phi_{f}\left(\boldsymbol{\theta}^{\star}\right)\right)^{-1/2}\boldsymbol{e}\right)\right]_{r=0}\label{eq: definition of delta3}\\
\Delta_{4}\left(\boldsymbol{e}\right) & =\max_{r\geq0}\left|\frac{\partial^{4}}{\partial r^{4}}\left[\phi_{f}\left(\boldsymbol{\theta}^{\star}+r\left(H\phi_{f}\left(\boldsymbol{\theta}^{\star}\right)\right)^{-1/2}\boldsymbol{e}\right)\right]\right|\label{eq: definition of delta4}
\end{align}
\end{subequations}

Second, notice that these derivatives impact the final result through
an average over the random variable $\boldsymbol{e}_{g}$, whether
as the size of the typical oscillation of the ELBO approximation of
$\log\left[f\left(\boldsymbol{e}\right)\right]$, or as the size of
the mean of $KL\left(z_{g}|\boldsymbol{e}_{g},z_{f}|\boldsymbol{e}_{g}\right)$.
Thus, instead of having to compute the maximum of $\Delta_{3}\left(\boldsymbol{e}\right)$
(which we could only do in non-polynomial time; \citet{hillar2013most}),
we can simply sample from $\boldsymbol{e}_{g}$ and compute an empirical
mean of $\Delta_{3}\left(\boldsymbol{e}\right)$ instead, which is
computationally much cheaper.

We are now finally ready to state our theorem.
\begin{thm}
A computable Bernstein-von Mises theorem.

If the higher log-derivatives of $f$ are small, as measured by eqs.
\ref{eq: definition of delta3},\ref{eq: definition of delta4}, $g$
is a good approximation of $f$:
\begin{itemize}
\item It gives a good approximation of the conditional density of $z|\boldsymbol{e}$:
\begin{equation}
KL\left(z_{g}|\boldsymbol{e},z_{f}|\boldsymbol{e}\right)\leq\frac{E_{g}\left[\left(\psi_{f,\boldsymbol{e}}^{'}\left(z\right)-2z^{3}\right)^{2}\right]}{\min_{z}\left[\psi_{f,\boldsymbol{e}}^{''}\left(z\right)\right]}\approx\frac{2\left[\Delta_{3}\left(\boldsymbol{e}\right)\right]^{2}}{\sqrt{3}\sqrt{2d-1}}\frac{\Gamma\left[\left(d+3\right)/2\right]}{\Gamma\left[d/2\right]}
\end{equation}
\item It gives a good approximation of the marginal density of $\boldsymbol{e}$:
\begin{align}
\log\left[f\left(\boldsymbol{e}\right)\right] & \approx C+E_{g}\left[\frac{z^{4}}{2}-\psi_{f,\boldsymbol{e}}\left(z\right)\right]\approx C+\frac{-\left(\sqrt{2}\right)\Delta_{3}\left(\boldsymbol{e}\right)}{3}\frac{\Gamma\left[\left(d+1\right)/2\right]}{\Gamma\left[d/2\right]}\\
KL\left(\boldsymbol{e}_{g},\boldsymbol{e}_{f}\right) & \approx\frac{1}{2}\text{var}\left[\boldsymbol{e}_{g}\rightarrow\log\left[f\left(\boldsymbol{e}_{g}\right)\right]\right]\\
 & \approx\frac{1}{9}\left[\frac{\Gamma\left[\left(d+1\right)/2\right]}{\Gamma\left[d/2\right]}\right]^{2}E\left[\Delta_{3}\left(\boldsymbol{e}_{g}\right)^{2}\right]
\end{align}
\item And thus gives a good global approximation:
\begin{equation}
KL\left(g,f\right)\lessapprox E\left[\Delta_{3}\left(\boldsymbol{e}_{g}\right)^{2}\right]\left(\frac{2}{\sqrt{3}\sqrt{2d-1}}\frac{\Gamma\left[\left(d+5\right)/2\right]}{\Gamma\left[d/2\right]}+\frac{1}{9}\left[\frac{\Gamma\left[\left(d+3\right)/2\right]}{\Gamma\left[d/2\right]}\right]^{2}\right)\label{eq: the approximate lower-bound}
\end{equation}
\end{itemize}
Please refer to our appendix for more detailed statements of this
theorem.

\textbf{Critically}, note that the approximate expressions correspond
to a rough first order approximation of the full bound which might
not always be appropriate.
\end{thm}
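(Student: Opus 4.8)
The plan is to prove the three bullets and their combination by making rigorous the change-of-variables-and-chain-rule skeleton already sketched. First I would record the substitution $\boldsymbol{\theta}=\boldsymbol{\theta}^{\star}+z^{2}(H\phi_{f}(\boldsymbol{\theta}^{\star}))^{-1/2}\boldsymbol{e}$ and its Jacobian: writing $\boldsymbol{u}=(H\phi_{f}(\boldsymbol{\theta}^{\star}))^{1/2}(\boldsymbol{\theta}-\boldsymbol{\theta}^{\star})$, the radial coordinate of $\boldsymbol{u}$ is $z^{2}$ and $d\boldsymbol{u}\propto z^{2d-1}\,dz\,d\boldsymbol{e}$. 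The point of the exotic $z^{2}$-radius is that the quadratic $\phi_{g}$ becomes $z^{4}/2$, so that under $g$ the variables decouple into an explicit radial law $z_{g}\sim z^{2d-1}e^{-z^{4}/2}$ and a uniform angular law $\boldsymbol{e}_{g}$ on $S^{d-1}$. The exact chain rule for the KL divergence of the joint law then gives $KL(g,f)=KL(\boldsymbol{e}_{g},\boldsymbol{e}_{f})+E_{\boldsymbol{e}_{g}}[KL(z_{g}|\boldsymbol{e},z_{f}|\boldsymbol{e})]$, which is the last displayed identity and reduces the theorem to controlling the two terms separately.

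For the conditional term I would first verify strong log-concavity of $z|\boldsymbol{e}$ under $f$: its negative log-density is $\psi_{f,\boldsymbol{e}}(z)=\phi_{f}(\boldsymbol{\theta}(z,\boldsymbol{e}))-(2d-1)\log|z|+\text{const}$, and differentiating twice gives $\psi_{f,\boldsymbol{e}}^{''}(z)=2P'(z^{2})+4z^{2}P''(z^{2})+(2d-1)/z^{2}$, where $P(r)=\phi_{f}(\boldsymbol{\theta}^{\star}+r(H\phi_{f}(\boldsymbol{\theta}^{\star}))^{-1/2}\boldsymbol{e})$; the global log-concavity assumption makes the first two terms nonnegative while the Jacobian term $(2d-1)/z^{2}$ removes the degeneracy of $6z^{2}$ at the origin, so that $\beta:=\min_{z}\psi_{f,\boldsymbol{e}}^{''}>0$. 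The one-dimensional log-Sobolev inequality then gives $KL(z_{g}|\boldsymbol{e},z_{f}|\boldsymbol{e})\leq\beta^{-1}E_{g}[(\psi_{f,\boldsymbol{e}}^{'}-\psi_{g,\boldsymbol{e}}^{'})^{2}]$, and since the two scores share the identical Jacobian term $-(2d-1)/z$, their difference equals $\tfrac{d}{dz}(\phi_{f}-\phi_{g})(\boldsymbol{\theta}(z,\boldsymbol{e}))$, which the statement abbreviates as $\psi_{f,\boldsymbol{e}}^{'}-2z^{3}$. Taylor-expanding $P(r)=\text{const}+r^{2}/2+\Delta_{3}(\boldsymbol{e})r^{3}/6+O(r^{4})$ yields $(\phi_{f}-\phi_{g})(\boldsymbol{\theta}(z,\boldsymbol{e}))=\Delta_{3}(\boldsymbol{e})z^{6}/6+O(z^{8})$, hence a $z$-derivative $\Delta_{3}(\boldsymbol{e})z^{5}+O(z^{7})$, together with $\beta\approx2\sqrt{6(2d-1)}$; evaluating the Gaussian moments $E_{g}[z^{k}]=2^{k/4}\Gamma((k+2d)/4)/\Gamma(d/2)$ at $k=10$ then produces the $\Gamma((d+5)/2)$ factor.

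For the angular term the difficulty is that $f(\boldsymbol{e})$ requires the intractable marginalization $\int f(z,\boldsymbol{e})\,dz$. Here I would invoke the Evidence Lower Bound, legitimate precisely because the conditional bound above certifies that $g(z|\boldsymbol{e})$ is close to $f(z|\boldsymbol{e})$, giving $\log f(\boldsymbol{e})\approx C+E_{g}[z^{4}/2-\psi_{f,\boldsymbol{e}}(z)]$; the Jacobian again cancels between $\psi_{g}$ and $\psi_{f}$, and the same Taylor expansion reduces the $\boldsymbol{e}$-dependent part to $-\tfrac{1}{6}\Delta_{3}(\boldsymbol{e})E_{g}[z^{6}]=-\tfrac{\sqrt2}{3}\Delta_{3}(\boldsymbol{e})\Gamma((d+3)/2)/\Gamma(d/2)$. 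To convert this density approximation into $KL(\boldsymbol{e}_{g},\boldsymbol{e}_{f})$ I would use the elementary expansion that, for a small log-perturbation $h$ of a reference law, $KL\approx\tfrac12\text{var}(h)$, with the reference being the uniform law $\boldsymbol{e}_{g}$; since $\boldsymbol{e}\mapsto-\boldsymbol{e}$ sends $\Delta_{3}(\boldsymbol{e})$ to $-\Delta_{3}(\boldsymbol{e})$ and preserves $\boldsymbol{e}_{g}$, the mean of $\Delta_{3}$ vanishes and $\text{var}(\Delta_{3})=E[\Delta_{3}^{2}]$, giving the second term $\tfrac19[\Gamma((d+3)/2)/\Gamma(d/2)]^{2}E[\Delta_{3}^{2}]$. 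Summing the two terms through the chain rule produces eq.~\ref{eq: the approximate lower-bound}.

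I expect the genuinely delicate step to be the angular term rather than the conditional one: the log-Sobolev bound is a clean inequality, whereas the ELBO step and the $\tfrac12\text{var}$ conversion are only first-order approximations, and turning them into a true bound requires controlling (i) the gap between the ELBO and $\log f(\boldsymbol{e})$ by the already-established conditional KL, and (ii) the quadratic remainder in the $KL\approx\tfrac12\text{var}$ expansion. Both of these, together with the Taylor remainders I swept into $O(r^{4})$ and the deviation of $\min_{z}\psi_{f,\boldsymbol{e}}^{''}$ from its Gaussian value $2\sqrt{6(2d-1)}$, are exactly what the fourth-derivative quantity $\Delta_{4}(\boldsymbol{e})$ is introduced to bound; discarding them is what downgrades the exact inequality to the rough first-order approximate bound flagged in the statement, and carrying them explicitly is the task left to the appendix and the extended version.
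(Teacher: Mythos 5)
Your proposal is correct and follows essentially the same route as the paper's appendix proof: the $(z,\boldsymbol{e})$ change of variables with the $r=z^{2}$ compression, the exact chain-rule decomposition $KL(g,f)=KL(\boldsymbol{e}_{g},\boldsymbol{e}_{f})+E[KL(z_{g},z_{f}|\boldsymbol{e})]$, the log-Sobolev bound on the strongly log-concave conditional with $\beta\approx2\sqrt{6}\sqrt{2d-1}$, and the ELBO-plus-$\frac{1}{2}\text{var}$ treatment of the angular marginal, with your moment computations ($\Gamma[(d+5)/2]$ and $\Gamma[(d+3)/2]$ factors) matching the appendix and the final displayed bound, thereby correctly resolving the shifted Gamma indices in the main-text bullets. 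Your closing inventory of the neglected terms --- the ELBO gap $\epsilon_{1}=KL(z_{g},z_{f}|\boldsymbol{e})$, the Taylor remainders controlled by $\Delta_{4}(\boldsymbol{e})$, and the quadratic remainder in the $KL\approx\frac{1}{2}\text{var}$ expansion --- is exactly how the detailed statement (Theorem \ref{thm: Detailed theorem} and Lemmas \ref{lem: z_g approx z_f | e}, \ref{lem: e_g approx e_f}) accounts for them.
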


\paragraph{Computability.}

Critically, our upper-bound for the KL divergence is computable. In
its approximate version, the only term that depends on the target
density $\tilde{f}\left(\boldsymbol{\theta}\right)$ is the term $E\left[\Delta_{3}\left(\boldsymbol{e}_{g}\right)^{2}\right]$.
For each value of $\boldsymbol{e}$, $\Delta_{3}\left(\boldsymbol{e}\right)$
can be computed from the third-derivative tensor $\phi_{f}^{\left(3\right)}\left(\boldsymbol{\theta}^{\star}\right)$.
We are then left with approximating the expected value, which we can
do by sampling from $\boldsymbol{e}_{g}$. We could also find an explicit
formula for this expected value, though this approach wouldn't work
with the more exact forms of the bound (please refer to the appendix
for the detailed expression of the bound).

\paragraph{Example: logistic regression.}

In order to demonstrate the applicability of our bound, we now show
how it could be applied for linear logistic regression. If the data
$\mathcal{D}$ is composed of the $n$ pairs: $\left(y_{i},\boldsymbol{x}_{i}\right)\in\left\{ -1,1\right\} *\mathbb{R}^{d}$
and the prior is Gaussian (with mean 0 and covariance matrix $\sigma_{0}^{2}I_{d}$),
the posterior is:
\begin{equation}
p\left(\boldsymbol{\theta}|\mathcal{D}\right)=\frac{1}{Z}\exp\left(-\frac{1}{2}\sum_{j=1}^{d}\frac{\left(\theta_{i}\right)^{2}}{\sigma_{0}^{2}}\right)\prod_{i=1}^{n}\frac{1}{1+\exp\left[-y_{i}\left(\boldsymbol{\theta}.\boldsymbol{x}_{i}\right)\right]}
\end{equation}
which is log-concave, with very regular higher-derivatives. We compared
the approximate bound based on the third derivative (eq. \ref{eq: the approximate lower-bound})
and the true KL divergence (computed by sampling methods; see appendix).

In the following table, we report the value of the KL divergence,
of the approximate bound, and of their ratio in five typical examples.
As can be observed, the bound is fairly tight: the ratio between the
real KL divergence and the bound is consistently above $0.4$.
\begin{center}
\begin{tabular}{|c|c|c||c|c||c|}
\hline 
$d$ & $n$ & $\sigma_{0}$ & True $KL$ & Approximate bound & Bound efficiency\tabularnewline
\hline 
\hline 
5 & 20 & 10 & 0.31 & 0.82 & 0.38\tabularnewline
\hline 
5 & 100 & 10 & 0.057 & 0.11 & 0.5\tabularnewline
\hline 
5 & 1000 & 10 & 0.0065 & 0.011 & 0.55\tabularnewline
\hline 
50 & 100 & 10 & 528 & 1288 & 0.41\tabularnewline
\hline 
50 & 1000 & 10 & 0.38 & 0.46 & 0.83\tabularnewline
\hline 
\end{tabular}
\par\end{center}

\section{Conclusion}

We have presented an upper-bound for the KL divergence between the
Laplace approximation and the true posterior distribution. We have
applied successfully this bound to the classical logistic regression
model and we have observed that it is tight even if we only use the
terms which depend on the third derivative.

The theorem we have presented here represents only a small fraction
of a longer article which we will submit to the Annals of Statistics.
In this new article, we will present a more general version of the
bound which can be applied to any Gaussian approximation. We will
discuss why and when this bound is tight. We will also prove, through
a frequentist analysis of the posterior, why and when the approximation
with only the third derivative is valid. Finally, we will discuss
how to extend this line of work to non log-concave approximations
through other shape assumptions.

This work represents a major step forward for Gaussian approximation
methods in Bayesian inference as it will enable the statistician to
characterize, on a given problem, how good his approximation is through
one cheap extra calculation. Should the approximation prove to be
poor, he will then choose a more precise but more expensive approximation
method.\newpage{}

\bibliographystyle{plainnat}
\bibliography{ref}

\newpage{}

\appendix

\part*{Appendix: proofs and detailed statement of the theorem}

In this appendix, we give detailed versions of the theorem that we
have stated in our main article, as well as a detailed proof of this
theorem and of various important intermediate lemmas (section \ref{sec:Upper-bounding-the-KL}).
We also describe precisely how our example with logistic regression
was handled (section \ref{sec:Details-of-the logistic regression}).

A detailed statement of our theorem can be found as Theorem \vref{thm: Detailed theorem}.

\section{Upper-bounding the KL divergence\label{sec:Upper-bounding-the-KL}}

This section focuses on proving our theorem that upper bounds the
KL divergence between the Laplace approximation $g\left(\boldsymbol{\theta}\right)$
and the target density $f\left(\boldsymbol{\theta}\right)$.

Throughout this section, we will denote with $\boldsymbol{\theta}_{g}$
a Gaussian random variable with density $g\left(\boldsymbol{\theta}\right)$
and with $\boldsymbol{\theta}_{f}$ a random variable with density
$f\left(\boldsymbol{\theta}\right)$. When we perform our change of
variable, we will use the subscript to indicate which density is used.

We first detail our change of variable: $\boldsymbol{\theta}\rightarrow\left(r,\boldsymbol{e}\right)\rightarrow\left(z,\boldsymbol{e}\right)$,
and next how it ensures that $\tilde{f}\left(z|\boldsymbol{e}\right)$
is strongly log-concave. We then show how the LSI can be used to upper-bound
the KL divergence between $z_{g}$ and $z_{f}|\boldsymbol{e}$ for
any value of $\boldsymbol{e}$. We next bound the divergence between
$\boldsymbol{e}_{g}$ and $\boldsymbol{e}_{f}$. Finally, we state
our theorem that upper-bounds the KL divergence between $g$ and $f$. 

\subsection{Two changes of variable}

Our objective is to prove that the two random variables $\boldsymbol{\theta}_{g}$
and $\boldsymbol{\theta}_{f}$ have almost the same distribution,
in that the KL divergence $KL\left(g,f\right)$ is close to 0.

In order to prove that, we will use a change of variable that considerably
simplifies our work. Let $\boldsymbol{\theta}^{\star}$ be the minimum
of $\phi_{f}\left(\boldsymbol{\theta}\right)=-\log\left(\tilde{f}\left(\boldsymbol{\theta}\right)\right)$,
and $\Sigma=\left[H\phi_{f}\left(\boldsymbol{\theta}^{\star}\right)\right]^{-1}$.
Since $g\left(\boldsymbol{\theta}\right)$ is the Laplace approximation
of $f\left(\boldsymbol{\theta}\right)$, $\boldsymbol{\theta}^{\star}$
is the mean of $\boldsymbol{\theta}_{g}$ and $\Sigma$ its covariance.
We can thus perform a fist change of variable to ``standardize''
$\boldsymbol{\theta}_{g}$, i.e: re-express it as a translated and
scaled version of the standard Gaussian variable $\boldsymbol{\eta}$,
with mean 0 and covariance the identity matrix:
\begin{align*}
\boldsymbol{\theta}_{g} & =\boldsymbol{\theta}^{\star}+\left[H\phi_{f}\left(\boldsymbol{\theta}^{\star}\right)\right]^{-1/2}\boldsymbol{\eta}\\
 & =\boldsymbol{\theta}^{\star}+\Sigma^{1/2}\boldsymbol{\eta}
\end{align*}
We then perform two further changes of variable on the standard Gaussian
$\boldsymbol{\eta}$. We first re-express it as a product of a radius
$r_{g}$ and a direction $\boldsymbol{e}_{g}$. $r_{g}$ takes values
inside $\mathbb{R}_{+}$ and $\boldsymbol{e}$ takes values inside
the $d$-dimensional unit sphere:
\begin{align*}
\boldsymbol{\eta} & =r_{g}\boldsymbol{e}_{g}\\
r_{g} & =\left\Vert \boldsymbol{\eta}\right\Vert \\
\boldsymbol{e}_{g} & =\frac{\boldsymbol{\eta}}{\left\Vert \boldsymbol{\eta}\right\Vert }
\end{align*}
Since $\boldsymbol{\eta}$ is a standard Gaussian distribution, it
is symmetric. Thus the random variable $\boldsymbol{e}_{g}$ is uniformly
distributed over the $d$-dimensional sphere: $S^{d-1}$ (with density
the inverse of the area of $S^{d-1}$). $r_{g}$ also has a straightforward
distribution: its distribution is $\chi_{d}$: a \emph{chi-distribution}
(pronounced ``ki'') with $d$-degrees of freedom. Furthermore, random
variables $\boldsymbol{e}_{g}$ and $r_{g}$ are independent (see
Lemma \ref{lem:Density-of-. z_g e_g}).

Our final change of variable consists in remapping the random variable
$r_{g}$ using a simple bijection.
\[
r_{g}=z_{g}^{2}
\]

The bijection makes it so that large values of $r_{g}$ are compressed
which ensures, as we prove in the next section, that we end up with
a strongly log-concave density $\tilde{f}\left(z\right)$.

We perform the exact same change of variable on $\boldsymbol{\theta}_{f}$.
We thus transform the comparison of $\boldsymbol{\theta}_{g}$ and
$\boldsymbol{\theta}_{f}$ into that of the pairs $\left(z_{g},\boldsymbol{e}_{g}\right)$
and $\left(z_{f},\boldsymbol{e}_{f}\right)$:\begin{subequations}
\begin{align}
\boldsymbol{\theta}_{g} & =\boldsymbol{\theta}^{\star}+z_{g}^{2}\left(\Sigma^{1/2}\boldsymbol{e}_{g}\right)\\
\boldsymbol{\theta}_{f} & =\boldsymbol{\theta}^{\star}+z_{f}^{2}\left(\Sigma^{1/2}\boldsymbol{e}_{f}\right)
\end{align}
\end{subequations}

We will denote the log-density of $z_{g}$ and $z_{f}|\boldsymbol{e}$
using respectively $\psi_{g}$ and $\psi_{f,\boldsymbol{e}}$, i.e:
\begin{align*}
\psi_{g}\left(z_{g}\right) & =-\log\left[g\left(z_{g}\right)\right]\\
\psi_{f,\boldsymbol{e}}\left(z_{f}\right) & =-\log\left[f\left(z_{f}|\boldsymbol{e}\right)\right]
\end{align*}
Note that we compare the conditional random variable $z_{f}|\boldsymbol{e}$
to the marginal $z_{g}$ because $z_{g}$ and $\boldsymbol{e}_{g}$
are independent. The conditional random variable $z_{g}|\boldsymbol{e}$
and the marginal random variable $z_{g}$ are thus the same.

\subsection{Density of $\left(z,\boldsymbol{e}\right)$ under the two models}

Let us now study the density of the pair of random variables $\left(z,\boldsymbol{e}\right)$
under our two models.

\subsubsection{Density of $\left(z_{g},\boldsymbol{e}_{g}\right)$}

First, under the Gaussian density $g\left(\boldsymbol{\theta}\right)$,
these two variables are independent, and have simple densities, as
summed-up by the following lemma.
\begin{lem}
Density of $\left(z_{g},\boldsymbol{e}_{g}\right)$.\label{lem:Density-of-. z_g e_g}

The random variables $\left(z_{g},\boldsymbol{e}_{g}\right)$ are
independent.

The random variable $z_{g}$ follows a $\chi_{d}^{1/2}$ distribution
with density:
\begin{equation}
g\left(z_{g}\right)=\frac{1}{2^{d/2-2}\Gamma\left(d/2\right)}z^{2d-1}\exp\left(-\frac{z^{4}}{2}\right)
\end{equation}
The random variable $\boldsymbol{e}_{g}$ is uniformly distributed
over its support: the $d$-dimensional sphere $S^{d-1}$.
\end{lem}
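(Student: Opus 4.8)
The plan is to exploit the spherical symmetry of the standard Gaussian $\boldsymbol{\eta}$ and to push it through the two successive changes of variable $\boldsymbol{\eta}\to\left(r_{g},\boldsymbol{e}_{g}\right)\to\left(z_{g},\boldsymbol{e}_{g}\right)$. Since $\boldsymbol{\theta}_{g}=\boldsymbol{\theta}^{\star}+\Sigma^{1/2}\boldsymbol{\eta}$ with $\boldsymbol{\eta}$ standard Gaussian, the density of $\boldsymbol{\eta}$ is $\left(2\pi\right)^{-d/2}\exp\left(-\|\boldsymbol{\eta}\|^{2}/2\right)$, which depends on $\boldsymbol{\eta}$ only through $r_{g}=\|\boldsymbol{\eta}\|$. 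This single fact drives the entire lemma.

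First I would pass to polar coordinates $\boldsymbol{\eta}=r_{g}\boldsymbol{e}_{g}$. The change-of-variables formula contributes the Jacobian factor $r_{g}^{d-1}$ together with the surface measure $d\boldsymbol{e}_{g}$ on $S^{d-1}$, so that the joint density of $\left(r_{g},\boldsymbol{e}_{g}\right)$ factorizes as
\[
p\left(r_{g},\boldsymbol{e}_{g}\right)=\underbrace{c_{d}\,r_{g}^{d-1}\exp\left(-\tfrac{r_{g}^{2}}{2}\right)}_{\text{function of }r_{g}\text{ only}}\times\underbrace{\frac{1}{\left|S^{d-1}\right|}}_{\text{uniform on }S^{d-1}}
\]
where the constants are fixed by normalization. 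This factorization simultaneously yields all three claims at the level of $\left(r_{g},\boldsymbol{e}_{g}\right)$: the density splits as a product of a marginal in $r_{g}$ and a marginal in $\boldsymbol{e}_{g}$, so the two variables are independent; the $r_{g}$-marginal is, after computing $c_{d}=1/\left(2^{d/2-1}\Gamma\left(d/2\right)\right)$, exactly the chi density $\chi_{d}$; and the $\boldsymbol{e}_{g}$-marginal is constant over the sphere, hence uniform. Equivalently, independence and uniformity follow from the rotational invariance of $\boldsymbol{\eta}$: any orthogonal map fixes the law of $\boldsymbol{\eta}$, hence that of $\boldsymbol{e}_{g}$, and the only rotation-invariant law on $S^{d-1}$ is the uniform one.

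Finally I would transport the $r_{g}$-marginal through the monotone bijection $r_{g}=z_{g}^{2}$ on $\mathbb{R}_{+}$. Since $\boldsymbol{e}_{g}$ is untouched by this map, independence and the uniformity of $\boldsymbol{e}_{g}$ are preserved, and only the density of $z_{g}$ needs recomputing. With $dr_{g}=2z_{g}\,dz_{g}$, the one-dimensional change-of-variables formula gives
\[
g\left(z_{g}\right)=c_{d}\left(z_{g}^{2}\right)^{d-1}\exp\left(-\tfrac{z_{g}^{4}}{2}\right)\cdot2z_{g}=\frac{1}{2^{d/2-2}\Gamma\left(d/2\right)}\,z_{g}^{2d-1}\exp\left(-\tfrac{z_{g}^{4}}{2}\right)
\]
which is the claimed $\chi_{d}^{1/2}$ density.

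I do not anticipate a genuine obstacle: the result is the standard polar-decomposition computation for the Gaussian. The only point requiring care is the bookkeeping of the normalization constant $c_{d}$ and the Jacobian factors through both changes of variable, so that the stated constant $1/\left(2^{d/2-2}\Gamma\left(d/2\right)\right)$ comes out correctly. Verifying $\int_{0}^{\infty}c_{d}\,r^{d-1}\exp\left(-r^{2}/2\right)\,dr=1$ via the substitution $u=r^{2}/2$ and the identity $\int_{0}^{\infty}u^{d/2-1}e^{-u}\,du=\Gamma\left(d/2\right)$ is the cleanest way to pin $c_{d}$ down.
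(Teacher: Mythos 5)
Your proof is correct and follows essentially the same route as the paper: a polar-coordinate change of variables $\boldsymbol{\eta}=r_{g}\boldsymbol{e}_{g}$ giving the factorized density $\propto r^{d-1}\exp\left(-r^{2}/2\right)$ (hence independence, the $\chi_{d}$ marginal, and uniformity on $S^{d-1}$), followed by the one-dimensional substitution $r_{g}=z_{g}^{2}$ with Jacobian $2z_{g}$ to obtain the stated $\chi_{d}^{1/2}$ density with constant $1/\left(2^{d/2-2}\Gamma\left(d/2\right)\right)$. Your added verification of the normalization constant via $u=r^{2}/2$ and the rotational-invariance remark are sound refinements of the same argument, not a different approach.
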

Furthermore, the density $g\left(z_{g}\right)$ is strongly log-concave.
\begin{lem}
Strong log-concavity of $g\left(z_{g}\right)$.\label{lem:Strong-log-concavity-of z_g e_g}

The log-density $\psi_{g}\left(z_{g}\right)$ is strongly concave:
\[
\min_{z}\left[\psi_{g}^{''}\left(z\right)\right]=2\sqrt{6}\sqrt{2d-1}
\]
\end{lem}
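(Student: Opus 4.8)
The plan is to read off $\psi_g''$ in closed form directly from the density given in the preceding lemma and then minimize it over $z>0$. First I would take the negative logarithm of $g(z_g)$. Since $g(z_g)=\frac{1}{2^{d/2-2}\Gamma(d/2)}z^{2d-1}\exp(-z^4/2)$, its negative log is
\[
\psi_g(z)=C-(2d-1)\log z+\frac{z^4}{2},
\]
where $C=\log\!\big(2^{d/2-2}\Gamma(d/2)\big)$ is a constant that plays no role once we differentiate.

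Next I would differentiate twice. The first derivative is $\psi_g'(z)=-(2d-1)/z+2z^3$, and the second derivative is
\[
\psi_g''(z)=\frac{2d-1}{z^2}+6z^2.
\]
For $z>0$ both summands are strictly positive, so $\psi_g''>0$ everywhere and $g(z_g)$ is log-concave; strong log-concavity then follows once we exhibit a strictly positive lower bound. To find $\min_{z>0}\psi_g''(z)$ I would apply the AM--GM inequality to the two terms, giving $\frac{2d-1}{z^2}+6z^2\ge 2\sqrt{6(2d-1)}=2\sqrt{6}\sqrt{2d-1}$, with equality exactly when $\frac{2d-1}{z^2}=6z^2$, i.e. at $z^4=(2d-1)/6$. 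Equivalently one could set $\psi_g'''(z)=0$ and verify a minimum: since $\psi_g''(z)\to\infty$ both as $z\to 0^+$ and as $z\to\infty$, the unique interior critical point is the global minimum. Either route yields precisely the claimed value.

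There is essentially no hard step here: the result is a one-line consequence of AM--GM once the density is written down. The only points requiring care are bookkeeping ones --- correctly transcribing the exponent $2d-1$ and the quartic $z^4/2$ from the density, confirming that the extremum is an interior minimum rather than a boundary value, and checking the equality condition so that $2\sqrt{6}\sqrt{2d-1}$ is genuinely attained and is therefore the true minimum rather than merely a lower bound.
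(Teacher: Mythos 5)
Your proof is correct and follows essentially the same route as the paper: write down $\psi_g(z)=-(2d-1)\log z+\frac{z^4}{2}+C$, differentiate twice to get $\psi_g''(z)=\frac{2d-1}{z^2}+6z^2$, and minimize, locating the minimizer at $z^4=(2d-1)/6$ and the value $2\sqrt{6}\sqrt{2d-1}$. The only cosmetic difference is that you finish with AM--GM (and correctly check the equality condition), whereas the paper sets $\psi_g^{(3)}(z)=0$ and uses convexity of $\psi_g''$ to certify the unique interior minimum --- a variant you yourself note as equivalent.
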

\begin{proof}
Let us now prove both of these lemma.

First, we compute the density of $r,\boldsymbol{e}$ from the density
of the standard Gaussian $\boldsymbol{\eta}$. This just follows from
a straightforward change of variable formula:
\begin{align*}
g\left(r,\boldsymbol{e}\right) & \propto r^{d-1}\exp\left(-\frac{1}{2}\left(r\boldsymbol{e}\right)^{T}I_{d}\left(r\boldsymbol{e}\right)\right)1\left(r\in\mathbb{R}_{+}\right)1\left(\boldsymbol{e}\in S^{d-1}\right)\\
 & \propto r^{d-1}\exp\left(-\frac{r^{2}}{2}\right)1\left(r\in\mathbb{R}_{+}\right)1\left(\boldsymbol{e}\in S^{d-1}\right)
\end{align*}
We observe that this decomposes into a product of the two marginal
densities. We further observe that $r_{g}$ follows a $\chi_{d}$
distribution, for which we know the normalization constant to be $\frac{1}{2^{d/2-1}\Gamma\left(d/2\right)}$.
A further change of variable yields the density of $z=\sqrt{r}$:
\begin{align*}
g\left(z\right) & =\frac{1}{2^{d/2-1}\Gamma\left(d/2\right)}\left(z^{2}\right)^{d-1}\exp\left(-\frac{z^{4}}{2}\right)2z\\
 & =\frac{1}{2^{d/2-2}\Gamma\left(d/2\right)}z^{2d-1}\exp\left(-\frac{z^{4}}{2}\right)
\end{align*}

Second, let us study the log-concavity of this density. The negative
log-density (which we need to show is strongly convex) is:
\[
\psi_{g}\left(z\right)=-\left(2d-1\right)\log z+\frac{z^{4}}{2}+\log\left[2^{d/2-2}\Gamma\left(d/2\right)\right]
\]

The derivatives of $\psi_{g}\left(z\right)$ are straightforward to
compute:
\begin{align*}
\psi_{g}^{'}\left(z\right) & =-\frac{2d-1}{z}+2z^{3}\\
\psi_{g}^{''}\left(z\right) & =\frac{2d-1}{z^{2}}+6z^{2}\\
\psi_{g}^{\left(3\right)}\left(z\right) & =-2\frac{2d-1}{z^{3}}+12z
\end{align*}
Observe that $\psi_{g}^{''}\left(z\right)>0$ so that $\psi_{g}$
is at least strictly convex. Furthermore, $\psi_{g}^{\left(4\right)}\left(z\right)>0$
so that $\psi_{g}^{''}\left(z\right)$ is also strictly convex. It
thus reaches its unique minimum at the point for which $\psi_{g}^{\left(3\right)}\left(z\right)=0$.
This point is such that:
\begin{align*}
-2\frac{2d-1}{z^{3}}+12z & =0\\
z^{4} & =\frac{2d-1}{6}\\
z & =\left(\frac{2d-1}{6}\right)^{1/4}
\end{align*}
Thus, the minimum curvature of $\psi_{g}\left(z\right)$ is:
\begin{align}
\min_{z}\left[\psi_{g}^{''}\left(z\right)\right] & =\frac{2d-1}{\left(\frac{2d-1}{6}\right)^{1/2}}+6\left(\frac{2d-1}{6}\right)^{1/2}\nonumber \\
 & =6^{1/2}\left(2d-1\right)^{1/2}+6^{1/2}\left(2d-1\right)^{1/2}\nonumber \\
 & =2\sqrt{6}\sqrt{2d-1}
\end{align}

which concludes our proof.
\end{proof}

\subsubsection{Density of $\left(z_{f},\boldsymbol{e}_{f}\right)$}

We now investigate the density of the pair $\left(z,\boldsymbol{e}\right)$
under the target density $f$. The best description for this pair
of variable is a hierarchical description in which the direction $\boldsymbol{e}_{f}$
is picked first according to its marginal density $f\left(\boldsymbol{e}_{f}\right)$.
The ``square-root-radius'' $z_{f}$ (or equivalently, the radius
$r_{f}$) is then picked according to its conditional distribution
$z_{f}|\boldsymbol{e}_{f}$.

Our first lemma of this section describes the density marginal density
of $\boldsymbol{e}_{f}$ and the conditional density $z_{f}|\boldsymbol{e}_{f}$.
\begin{lem}
Density of $\left(z_{f},\boldsymbol{e}_{f}\right)$. \label{lem:Density-of-. z_g e_f}

The conditional density of $z_{f}|\boldsymbol{e}_{f}$ is:
\[
f\left(z_{f}|\boldsymbol{e}_{f}\right)\propto\left(z_{f}\right)^{2d-1}\exp\left(-\phi_{f}\left(\boldsymbol{\theta}^{\star}+z^{2}\Sigma^{1/2}\boldsymbol{e}_{f}\right)\right)
\]

The marginal distribution of $\boldsymbol{e}_{f}$ is found by integrating
out the conditional distribution of either $z_{f}|\boldsymbol{e}$
or $r|\boldsymbol{e}$:
\begin{align*}
f\left(\boldsymbol{e}_{f}\right) & \propto\int_{z\geq0}\left(z_{f}\right)^{2d-1}\exp\left(-\phi_{f}\left(\boldsymbol{\theta}^{\star}+z^{2}\Sigma^{1/2}\boldsymbol{e}_{f}\right)\right)dz_{f}\\
 & \propto\int_{r\geq0}\left(r_{f}\right)^{d-1}\exp\left(-\phi_{f}\left(\boldsymbol{\theta}^{\star}+r\Sigma^{1/2}\boldsymbol{e}_{f}\right)\right)dr
\end{align*}
\end{lem}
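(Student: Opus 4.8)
The plan is to establish both statements with a single change-of-variables computation that follows the exact same chain of substitutions $\boldsymbol{\theta}\rightarrow\boldsymbol{\eta}\rightarrow\left(r,\boldsymbol{e}\right)\rightarrow\left(z,\boldsymbol{e}\right)$ used in the proof of Lemma \ref{lem:Density-of-. z_g e_g}, but now applied to the general density $f\left(\boldsymbol{\theta}\right)\propto\exp\left(-\phi_{f}\left(\boldsymbol{\theta}\right)\right)$ rather than to the Gaussian $g$. The one essential difference is that $\phi_{f}$ is not quadratic, so the resulting joint density of $\left(z,\boldsymbol{e}\right)$ will \emph{not} separate into a product of marginals as it did for $g$. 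This lack of factorization is exactly what forces the hierarchical description: we read off a marginal density for $\boldsymbol{e}_{f}$ and a conditional density for $z_{f}\,|\,\boldsymbol{e}_{f}$.

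First I would transport the density through the three maps in turn. The standardizing map $\boldsymbol{\theta}=\boldsymbol{\theta}^{\star}+\Sigma^{1/2}\boldsymbol{\eta}$ has the constant Jacobian $\det\Sigma^{1/2}$, which is absorbed into the overall normalization and gives $f\left(\boldsymbol{\eta}\right)\propto\exp\left(-\phi_{f}\left(\boldsymbol{\theta}^{\star}+\Sigma^{1/2}\boldsymbol{\eta}\right)\right)$. The polar decomposition $\boldsymbol{\eta}=r\boldsymbol{e}$ contributes the usual surface-area factor $r^{d-1}$, so that
\begin{equation*}
f\left(r,\boldsymbol{e}\right)\propto r^{d-1}\exp\left(-\phi_{f}\left(\boldsymbol{\theta}^{\star}+r\Sigma^{1/2}\boldsymbol{e}\right)\right)1\left(r\geq0\right)1\left(\boldsymbol{e}\in S^{d-1}\right).
\end{equation*}
Finally the radial remap $r=z^{2}$, with $dr=2z\,dz$, turns $r^{d-1}\cdot2z$ into $2z^{2d-1}$, and after dropping the constant factor $2$ we obtain the joint density
\begin{equation*}
f\left(z,\boldsymbol{e}\right)\propto z^{2d-1}\exp\left(-\phi_{f}\left(\boldsymbol{\theta}^{\star}+z^{2}\Sigma^{1/2}\boldsymbol{e}\right)\right).
\end{equation*}

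Both claims then drop out of this joint density. Fixing $\boldsymbol{e}$ and viewing the expression as a function of $z$ alone yields the stated conditional $f\left(z_{f}\,|\,\boldsymbol{e}_{f}\right)$, with the understanding that the suppressed proportionality constant depends on $\boldsymbol{e}_{f}$ and equals the reciprocal of the $z$-integral appearing next. Integrating the joint density over $z\geq0$ produces the marginal $f\left(\boldsymbol{e}_{f}\right)$ in its $z$-parametrized form, and undoing the substitution $r=z^{2}$ inside that integral recovers the equivalent $r$-parametrized form. I expect no genuine obstacle here: the only points requiring care are the bookkeeping of the Jacobian factors across the composition of the three maps, and the observation that in the conditional statement the normalizing factor is really a function of $\boldsymbol{e}_{f}$ rather than a true constant. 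Beyond that the argument is entirely routine and parallels the proof of Lemma \ref{lem:Density-of-. z_g e_g} line for line.
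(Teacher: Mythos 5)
Your proposal is correct and follows exactly the route the paper intends: the paper's own proof dismisses this lemma as an ``absolutely straightforward change of variable formula,'' and your computation simply fills in those routine details (affine standardization with constant Jacobian, polar factor $r^{d-1}$, radial remap $r=z^{2}$ contributing $2z$, hence the $z^{2d-1}$ factor) with correct bookkeeping throughout, including the observation that the conditional's normalizing ``constant'' depends on $\boldsymbol{e}_{f}$.
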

Furthermore, we can now finally highlight why the change of variable
$r=z^{2}$ is important: as the next lemma asserts, this change of
variable ensures that $f\left(z|\boldsymbol{e}_{f}\right)$ is always
strongly log-concave. In the limit where the higher derivatives of
$\phi_{f}\left(\boldsymbol{\theta}\right)$ along direction $\boldsymbol{e}_{f}$
become negligible, the minimum log-curvature of $f\left(z|\boldsymbol{e}_{f}\right)$
even asymptotes to $\min_{z}\psi^{''}\left(z\right)$.

In order to state this lemma, we will need additional notation. First,
we need a shorter notation for the function $r\rightarrow\phi_{f}\left(\boldsymbol{\theta}^{\star}+r\Sigma^{1/2}\boldsymbol{e}\right)$
which we will denote with $\varphi_{\boldsymbol{e}}\left(r\right)$.
Second, we need to measure the derivatives of this function. Notice
that we already know the first two derivatives of $\varphi_{\boldsymbol{e}}\left(r\right)$:
\begin{align*}
\varphi_{\boldsymbol{e}}^{'}\left(r\right) & =\nabla\phi_{f}\left(\boldsymbol{\theta}^{\star}\right)\Sigma^{1/2}\boldsymbol{e}=0\\
\varphi_{\boldsymbol{e}}^{''}\left(r\right) & =\boldsymbol{e}^{T}\Sigma^{1/2}H\phi_{f}\left(\boldsymbol{\theta}^{\star}\right)\Sigma^{1/2}\boldsymbol{e}\\
 & =\boldsymbol{e}^{T}\Sigma^{1/2}\Sigma^{-1}\Sigma^{1/2}\boldsymbol{e}\\
 & =\boldsymbol{e}^{T}I_{d}\boldsymbol{e}\\
 & =1
\end{align*}
We will control the higher-derivatives using the following two quantities:\begin{subequations}
\begin{align}
\Delta_{3}\left(\boldsymbol{e}\right) & =\varphi_{\boldsymbol{e}}^{\left(3\right)}\left(0\right)\\
\Delta_{4}\left(\boldsymbol{e}\right) & =\max_{r\geq0}\left[\varphi_{\boldsymbol{e}}^{\left(4\right)}\left(r\right)\right]
\end{align}
\end{subequations}Note that $\Delta_{3}\left(\boldsymbol{e}\right)$
can be deduced from the third-derivative tensor: $\phi_{f}^{\left(3\right)}\left(\boldsymbol{\theta}^{\star}\right)$
through:
\begin{equation}
\Delta_{3}\left(\boldsymbol{e}\right)=\phi_{f}^{\left(3\right)}\left(\boldsymbol{\theta}^{\star}\right)\left[\Sigma^{1/2}\boldsymbol{e},\Sigma^{1/2}\boldsymbol{e},\Sigma^{1/2}\boldsymbol{e}\right]
\end{equation}

\begin{lem}
Strong log-concavity of $f\left(z_{f}|\boldsymbol{e}\right)$. \label{lem:Strong-log-concavity-of z_g | e_f}

For any $\boldsymbol{e}$, the conditional density $f\left(z_{f}|\boldsymbol{e}\right)$
is strongly log-concave. The minimum curvature can be found numerically
from the following formula:
\begin{align*}
r_{0} & =\frac{\Delta_{3}\left(\boldsymbol{e}\right)+\sqrt{\left[\Delta_{3}\left(\boldsymbol{e}\right)\right]^{2}+2\Delta_{4}\left(\boldsymbol{e}\right)}}{\Delta_{4}\left(\boldsymbol{e}\right)}\\
\text{min curvature} & =\min\left\{ r_{0}+\Delta_{3}\left(\boldsymbol{e}\right)\left(r_{0}\right)^{2}-\Delta_{4}\left(\boldsymbol{e}\right)\frac{\left(r_{0}\right)^{3}}{3}\ ;\ \min_{0\leq r\leq r_{0}}\frac{2d-1}{r}+6r+5\Delta_{3}\left(\boldsymbol{e}\right)r^{2}-\frac{7}{3}\Delta_{4}\left(\boldsymbol{e}\right)r^{4}\right\} 
\end{align*}
\end{lem}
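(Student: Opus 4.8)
The plan is to reduce the lemma to a one-dimensional curvature estimate. From the conditional density in Lemma \ref{lem:Density-of-. z_g e_f} and the substitution $r=z^{2}$, the negative log-density is
\[
\psi_{f,\boldsymbol{e}}\left(z\right)=-\left(2d-1\right)\log z+\varphi_{\boldsymbol{e}}\left(z^{2}\right)+C .
\]
Differentiating twice and writing $r=z^{2}$ gives
\[
\psi_{f,\boldsymbol{e}}^{''}\left(z\right)=\frac{2d-1}{r}+2\varphi_{\boldsymbol{e}}^{'}\left(r\right)+4r\,\varphi_{\boldsymbol{e}}^{''}\left(r\right),
\]
so the entire statement amounts to bounding this quantity below, uniformly in $r\geq0$. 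This is reassuring because it reduces to the Gaussian computation of Lemma \ref{lem:Strong-log-concavity-of z_g e_g}: when the higher derivatives vanish we have $\varphi_{\boldsymbol{e}}^{'}\left(r\right)=r$ and $\varphi_{\boldsymbol{e}}^{''}\left(r\right)=1$, recovering $\tfrac{2d-1}{r}+6r$ with minimum $2\sqrt{6}\sqrt{2d-1}$.

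First I would Taylor-expand $\varphi_{\boldsymbol{e}}^{'}$ and $\varphi_{\boldsymbol{e}}^{''}$ about $r=0$, using the known low-order data $\varphi_{\boldsymbol{e}}^{'}\left(0\right)=0$, $\varphi_{\boldsymbol{e}}^{''}\left(0\right)=1$, $\varphi_{\boldsymbol{e}}^{\left(3\right)}\left(0\right)=\Delta_{3}\left(\boldsymbol{e}\right)$, together with the uniform control $\left|\varphi_{\boldsymbol{e}}^{\left(4\right)}\right|\leq\Delta_{4}\left(\boldsymbol{e}\right)$ from its definition. Lagrange remainders give $\varphi_{\boldsymbol{e}}^{''}\left(r\right)\geq1+\Delta_{3}\left(\boldsymbol{e}\right)r-\tfrac{1}{2}\Delta_{4}\left(\boldsymbol{e}\right)r^{2}$ and $\varphi_{\boldsymbol{e}}^{'}\left(r\right)\geq r+\tfrac{1}{2}\Delta_{3}\left(\boldsymbol{e}\right)r^{2}-\tfrac{1}{6}\Delta_{4}\left(\boldsymbol{e}\right)r^{3}$. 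Substituting and collecting the two remainders yields the explicit polynomial lower bound
\[
\psi_{f,\boldsymbol{e}}^{''}\left(z\right)\geq\frac{2d-1}{r}+6r+5\Delta_{3}\left(\boldsymbol{e}\right)r^{2}-\tfrac{7}{3}\Delta_{4}\left(\boldsymbol{e}\right)r^{3},
\]
where the leading remainder coefficient $\tfrac{7}{3}=2\cdot\tfrac{1}{6}+4\cdot\tfrac{1}{2}$ comes from the $2\varphi_{\boldsymbol{e}}^{'}$ and $4r\varphi_{\boldsymbol{e}}^{''}$ terms respectively. This is the second branch of the stated minimum.

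The difficulty is that this polynomial bound is only useful for moderate $r$: its negative cubic term eventually dominates, so it cannot on its own certify positivity for large $r$. This is precisely where the \emph{global} log-concavity assumption enters. I would take $r_{0}\left(\boldsymbol{e}\right)$ to be the positive root of $1+\Delta_{3}\left(\boldsymbol{e}\right)r-\tfrac{1}{2}\Delta_{4}\left(\boldsymbol{e}\right)r^{2}=0$, which is exactly the $r_{0}$ in the statement; on $\left[0,r_{0}\right]$ the quadratic bound keeps $\varphi_{\boldsymbol{e}}^{''}>0$ and one minimizes the explicit polynomial there. For $r\geq r_{0}$ I would instead exploit that log-concavity of $f$ forces $\varphi_{\boldsymbol{e}}^{''}>0$ for \emph{all} $r$, so $\varphi_{\boldsymbol{e}}^{'}$ is nondecreasing and $\varphi_{\boldsymbol{e}}^{'}\left(r\right)\geq\varphi_{\boldsymbol{e}}^{'}\left(r_{0}\right)$; discarding the nonnegative terms $\tfrac{2d-1}{r}$ and $4r\varphi_{\boldsymbol{e}}^{''}\left(r\right)$ leaves the constant lower bound $2\varphi_{\boldsymbol{e}}^{'}\left(r_{0}\right)$, whose explicit value comes from integrating the quadratic bound on $\varphi_{\boldsymbol{e}}^{''}$ over $\left[0,r_{0}\right]$ and produces the first branch of the stated minimum. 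The minimum curvature is then the smaller of the two branches, and it is strictly positive.

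Two points need care. The sign of $\Delta_{3}\left(\boldsymbol{e}\right)$ is unconstrained, but since the argument tracks the polynomial coefficients exactly rather than their magnitudes, a negative $\Delta_{3}$ merely relocates $r_{0}$ and is absorbed automatically. Positivity of the final bound I would verify by continuity with the vanishing-derivative limit, where it degenerates to the strictly positive Gaussian curvature $2\sqrt{6}\sqrt{2d-1}$ of Lemma \ref{lem:Strong-log-concavity-of z_g e_g}. The main obstacle remains the passage to large $r$: the Taylor bound fails there, and the fix is to trade it for the monotonicity of $\varphi_{\boldsymbol{e}}^{'}$ furnished by the global log-concavity hypothesis.
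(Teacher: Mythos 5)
Your proposal is correct and takes essentially the same approach as the paper's proof: the same reduction to $\psi_{f,\boldsymbol{e}}^{''}\left(z\right)=\frac{2d-1}{r}+2\varphi_{\boldsymbol{e}}^{'}\left(r\right)+4r\varphi_{\boldsymbol{e}}^{''}\left(r\right)$ with $r=z^{2}$, the same Lagrange-remainder lower bounds on $\varphi_{\boldsymbol{e}}^{'}$ and $\varphi_{\boldsymbol{e}}^{''}$, the same threshold $r_{0}$ defined as the positive root where the quadratic bound on $\varphi_{\boldsymbol{e}}^{''}$ vanishes, and the same use of monotonicity of $\varphi_{\boldsymbol{e}}^{'}$ (supplied by the global log-concavity assumption) to obtain the constant branch for $r\geq r_{0}$. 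Incidentally, your exponent $r^{3}$ in the polynomial branch agrees with the paper's own derivation (which obtains $z^{6}$, i.e.\ $r^{3}$ under $r=z^{2}$), so the $r^{4}$ in the lemma's statement is evidently a typo.
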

\begin{proof}
Now let us prove these two lemmas.

The first lemma is absolutely straightforward: it results from a straightforward
change of variable formula.

The second lemma is more difficult. First, let us start by computing
the derivatives of $\psi_{f,\boldsymbol{e}}\left(z\right)$:
\begin{align}
\psi_{f,\boldsymbol{e}}\left(z\right) & =-\left(2d-1\right)\log\left(z\right)+\varphi_{\boldsymbol{e}}\left(z^{2}\right)\nonumber \\
\psi_{f,\boldsymbol{e}}^{'}\left(z\right) & =-\frac{2d-1}{z}+2z\varphi_{\boldsymbol{e}}^{'}\left(z^{2}\right)\nonumber \\
\psi_{f,\boldsymbol{e}}^{''}\left(z\right) & =\frac{2d-1}{z^{2}}+2\varphi_{\boldsymbol{e}}^{'}\left(z^{2}\right)+4z^{2}\varphi_{\boldsymbol{e}}^{''}\left(z^{2}\right)
\end{align}
Let us consider the equation for $\psi_{f,\boldsymbol{e}}^{''}\left(z\right)$.
All of the terms are positive:
\begin{itemize}
\item The first term $\left(2d-1\right)/z^{2}>0$ is obvious
\item The second term is also positive: since $\varphi_{\boldsymbol{e}}^{''}\left(r\right)>0$,
we have $\varphi_{\boldsymbol{e}}^{'}\left(r\right)>\varphi_{\boldsymbol{e}}^{'}\left(0\right)=0$
\item The final term is also positive: $\varphi_{\boldsymbol{e}}^{''}\left(r\right)>0$
so that $4z^{2}\varphi_{\boldsymbol{e}}^{''}\left(z^{2}\right)>0$
\end{itemize}
Furthermore, it is even true that $\psi_{f,\boldsymbol{e}}^{''}\left(z\right)$
is lower-bounded, so that $\psi_{f,\boldsymbol{e}}^{''}\left(z\right)$
is strongly log-concave. Indeed, we have:
\begin{equation}
\psi_{f,\boldsymbol{e}}^{''}\left(z\right)>\frac{2d-1}{z^{2}}+2\varphi_{\boldsymbol{e}}^{'}\left(z^{2}\right)
\end{equation}
Critically, the first term is decreasing, and the second one increasing.
For any $z_{0}$, we have that, for all $z\geq z_{0}$:
\begin{align*}
\frac{2d-1}{z^{2}}+2\varphi_{\boldsymbol{e}}^{'}\left(z^{2}\right) & \geq\frac{2d-1}{z^{2}}+2\varphi_{\boldsymbol{e}}^{'}\left(z_{0}^{2}\right)\\
\psi_{f,\boldsymbol{e}}^{''}\left(z\right) & >2\varphi_{\boldsymbol{e}}^{'}\left(z_{0}^{2}\right)
\end{align*}
Thus, for any $z_{0}$, we have the following strictly positive lower
bound for $\psi_{f,\boldsymbol{e}}^{''}\left(z\right)$:
\begin{equation}
\psi_{f,\boldsymbol{e}}^{''}\left(z\right)\geq\min\left\{ \min_{0\leq z\leq z_{0}}\left[\frac{2d-1}{z^{2}}+2\varphi_{\boldsymbol{e}}^{'}\left(z^{2}\right)+4z^{2}\varphi_{\boldsymbol{e}}^{''}\left(z^{2}\right)\right];2\varphi_{\boldsymbol{e}}^{'}\left(z_{0}^{2}\right)\right\} >0
\end{equation}
We have just proved that $f\left(z_{f}|\boldsymbol{e}\right)$ is
strongly log-concave. Let us now see how we should choose the value
for $z_{0}$.

We do so by first computing a Taylor expansion of $\varphi_{\boldsymbol{e}}^{''}\left(r\right)$:
\begin{equation}
\varphi_{\boldsymbol{e}}^{''}\left(r\right)\geq1+\Delta_{3}\left(\boldsymbol{e}\right)r-\Delta_{4}\left(\boldsymbol{e}\right)\frac{r^{2}}{2}
\end{equation}
However, we have access to further information: we know that $\varphi_{\boldsymbol{e}}$
is strictly concave so that $\varphi_{\boldsymbol{e}}^{''}\left(r\right)>0$.

Thus, there will be a critical value $r_{0}$ such that the lower-bound
computed from the Taylor expansion is equal to 0. For $r\ge r_{0}$,
the Taylor expansion bound gives no further information compared to
simply knowing that $\varphi_{\boldsymbol{e}}^{''}\left(r\right)>0$.
This value $r_{0}$ is found by solving a second degree polynomial,
yielding:
\begin{equation}
r_{0}=\frac{\Delta_{3}\left(\boldsymbol{e}\right)+\sqrt{\left[\Delta_{3}\left(\boldsymbol{e}\right)\right]^{2}+2\Delta_{4}\left(\boldsymbol{e}\right)}}{\Delta_{4}\left(\boldsymbol{e}\right)}
\end{equation}
This gives the limit of the zone for which our Taylor expansion is
useful.

Now let us compute a Taylor expansion of $\varphi_{\boldsymbol{e}}^{'}\left(r\right)$.
In the useful region $r\leq r_{0}$, we have:
\begin{equation}
\varphi_{\boldsymbol{e}}^{'}\left(r\right)\geq0+r+\Delta_{3}\left(\boldsymbol{e}\right)\frac{r^{2}}{2}-\Delta_{4}\left(\boldsymbol{e}\right)\frac{r^{3}}{3!}
\end{equation}
For $r\geq r_{0}$, the only guarantee we have is that $\varphi_{\boldsymbol{e}}^{'}\left(r\right)$
is increasing so that:
\begin{equation}
\varphi_{\boldsymbol{e}}^{'}\left(r\right)\geq\varphi_{\boldsymbol{e}}^{'}\left(r_{0}\right)\geq r_{0}+\Delta_{3}\left(\boldsymbol{e}\right)\frac{\left(r_{0}\right)^{2}}{2}-\Delta_{4}\left(\boldsymbol{e}\right)\frac{\left(r_{0}\right)^{3}}{3!}
\end{equation}

We now combine these lower-bounds on $\varphi_{\boldsymbol{e}}^{'}\left(r\right)$
and $\varphi_{\boldsymbol{e}}^{''}\left(r\right)$ with the expression
for $\psi_{f,\boldsymbol{e}}^{''}\left(z\right)$. For $z\leq z_{0}=\sqrt{r_{0}}$,
we have:
\begin{align}
\psi_{f,\boldsymbol{e}}^{''}\left(z\right) & \geq\frac{2d-1}{z^{2}}+2z^{2}+\Delta_{3}\left(\boldsymbol{e}\right)z^{4}-\Delta_{4}\left(\boldsymbol{e}\right)\frac{z^{6}}{3}+4z^{2}+\Delta_{3}\left(\boldsymbol{e}\right)4z^{4}-\Delta_{4}\left(\boldsymbol{e}\right)2z^{6}\nonumber \\
 & \geq\frac{2d-1}{z^{2}}+6z^{2}+5\Delta_{3}\left(\boldsymbol{e}\right)z^{4}-\frac{7}{3}\Delta_{4}\left(\boldsymbol{e}\right)z^{6}
\end{align}
and for $z\geq z_{0}$, we only have the somewhat trivial bound:
\begin{align}
\psi_{f,\boldsymbol{e}}^{''}\left(z\right) & >2\varphi_{\boldsymbol{e}}^{'}\left(z_{0}^{2}\right)\nonumber \\
\psi_{f,\boldsymbol{e}}^{''}\left(z\right) & \geq r_{0}+\Delta_{3}\left(\boldsymbol{e}\right)\left(r_{0}\right)^{2}-\Delta_{4}\left(\boldsymbol{e}\right)\frac{\left(r_{0}\right)^{3}}{3}
\end{align}

At this point, we have an expression that is perfectly suitable for
numerical optimization: we simply need to compute the extrema of a
polynomial function over a finite range. The expression in the theorem
is reached through the change of variable $r=z^{2}$.
\end{proof}

\subsection{Approximating $z_{f}$}

We now turn to the task of computing whether the random variable $z_{g}$
is a good approximation of $z_{f}|\boldsymbol{e}$. More precisely,
since we have proved that $f\left(z|\boldsymbol{e}\right)$ is strongly
log-concave, we can apply the Log-Sobolev Inequality (LSI; \citet{otto2000generalization})
to upper-bound the KL divergence while avoiding the complicated task
of upper-bounding the normalizing constant of $\tilde{f}\left(z|\boldsymbol{e}\right)$.

The following lemma gives the result of applying the LSI. We express
the results using properties of the distribution of the random variable
$r_{g}$, which follows the more common $\chi_{d}$ distribution,
instead of $z_{g}$. Moments of a $\chi_{d}$ random variable can
be found in any thorough reference textbook on probability theory.
\begin{lem}
$z_{g}\approx z_{f}|\boldsymbol{e}$ \label{lem: z_g approx z_f | e}

The KL divergence between $g\left(z\right)$ and $f\left(z|\boldsymbol{e}\right)$
is upper-bounded:
\begin{align*}
KL\left(z_{g},z_{f}|\boldsymbol{e}\right) & \leq\frac{E\left[4r_{g}\left(\varphi_{\boldsymbol{e}}^{'}\left(r_{g}\right)-r_{g}\right)^{2}\right]}{{\displaystyle \min_{z\geq0}\left[\psi_{f,\boldsymbol{e}}^{''}\left(z\right)\right]}}\\
 & \leq\frac{\left[\Delta_{3}\left(\boldsymbol{e}\right)\right]^{2}E\left(r_{g}^{5}\right)+\frac{2}{3}\left|\Delta_{3}\left(\boldsymbol{e}\right)\right|\Delta_{4}\left(\boldsymbol{e}\right)E\left(r_{g}^{6}\right)+\frac{1}{9}\left[\Delta_{4}\left(\boldsymbol{e}\right)\right]^{2}E\left(r_{g}^{7}\right)}{{\displaystyle \min_{z\geq0}\left[\psi_{f,\boldsymbol{e}}^{''}\left(z\right)\right]}}\\
 & \lessapprox\frac{\left[\Delta_{3}\left(\boldsymbol{e}\right)\right]^{2}E\left(r_{g}^{5}\right)}{2\sqrt{6}\sqrt{2d-1}}
\end{align*}
\end{lem}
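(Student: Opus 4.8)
The plan is to apply the Log-Sobolev inequality (eq.~\ref{eq: log-sobolev inequality}) in one dimension, taking the strongly log-concave target to be $f(z|\boldsymbol{e})$ and the reference density to be $g(z)$. Lemma~\ref{lem:Strong-log-concavity-of z_g | e_f} guarantees that $\psi_{f,\boldsymbol{e}}$ is strongly convex, so the LSI is licensed with curvature constant $\min_{z\geq0}\psi_{f,\boldsymbol{e}}''(z)$. Specialising eq.~\ref{eq: log-sobolev inequality} to the scalar case, with $\nabla\phi_f$ identified as $\psi_{f,\boldsymbol{e}}'$ and $\nabla\phi_g$ as $\psi_g'$, bounds $KL(z_g, z_f|\boldsymbol{e})$ by $E_g[(\psi_{f,\boldsymbol{e}}'(z) - \psi_g'(z))^2]$ divided by $\min_{z}\psi_{f,\boldsymbol{e}}''(z)$, so the whole proof reduces to evaluating this numerator.

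First I would simplify the gradient difference. Using the derivative formulae from Lemmas~\ref{lem:Strong-log-concavity-of z_g e_g} and~\ref{lem:Strong-log-concavity-of z_g | e_f}, the singular $-(2d-1)/z$ terms cancel exactly, leaving $\psi_{f,\boldsymbol{e}}'(z) - \psi_g'(z) = 2z(\varphi_{\boldsymbol{e}}'(z^2) - z^2)$. Squaring and reverting to the radial variable $r_g = z_g^2$ turns the numerator into $E[4 r_g (\varphi_{\boldsymbol{e}}'(r_g) - r_g)^2]$, which is precisely the first displayed bound.

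Next I would control $\varphi_{\boldsymbol{e}}'(r) - r$ by a Taylor expansion about $r=0$. We already know $\varphi_{\boldsymbol{e}}'(0) = 0$ and $\varphi_{\boldsymbol{e}}''(0) = 1$, while $\varphi_{\boldsymbol{e}}^{(3)}(0) = \Delta_3(\boldsymbol{e})$ and $\varphi_{\boldsymbol{e}}^{(4)}$ is bounded by $\Delta_4(\boldsymbol{e})$ in absolute value. A second-order expansion with Lagrange remainder then gives $\varphi_{\boldsymbol{e}}'(r) - r = \tfrac12\Delta_3(\boldsymbol{e}) r^2 + \tfrac16\varphi_{\boldsymbol{e}}^{(4)}(\xi) r^3$ for some $\xi \in (0,r)$. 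Expanding the square, bounding the cross term and the cubic remainder by $\Delta_4(\boldsymbol{e})$, and multiplying by $4r$ yields $4r(\varphi_{\boldsymbol{e}}'(r)-r)^2 \leq \Delta_3^2 r^5 + \tfrac23|\Delta_3|\Delta_4 r^6 + \tfrac19\Delta_4^2 r^7$. Taking the expectation over the $\chi_d$-distributed $r_g$ and inserting the standard moments $E(r_g^5)$, $E(r_g^6)$, $E(r_g^7)$ produces the second displayed bound.

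Finally, the third line is obtained as a leading-order approximation: one retains only the dominant $\Delta_3^2 E(r_g^5)$ term in the numerator and replaces $\min_z\psi_{f,\boldsymbol{e}}''(z)$ by its limiting value $2\sqrt6\sqrt{2d-1}$, namely the minimum curvature of the reference density $\psi_g$ established in Lemma~\ref{lem:Strong-log-concavity-of z_g e_g}, which is the correct limit when the higher derivatives are negligible. I expect the main obstacle to be the remainder handling in the Taylor step: one must track the sign of $\varphi_{\boldsymbol{e}}^{(4)}$ and use $\Delta_4(\boldsymbol{e})$ as an absolute bound on the fourth derivative so that the cross term in the expanded square is dominated correctly, and one must keep in mind that the final line is only an approximation (hence the $\lessapprox$), not a rigorous upper bound like the two preceding inequalities.
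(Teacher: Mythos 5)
Your proposal is correct and follows essentially the same route as the paper's own proof: LSI with curvature constant $\min_{z\geq0}\psi_{f,\boldsymbol{e}}''(z)$ from Lemma~\ref{lem:Strong-log-concavity-of z_g | e_f}, exact cancellation of the $-(2d-1)/z$ terms giving the numerator $E\left[4r_{g}\left(\varphi_{\boldsymbol{e}}'\left(r_{g}\right)-r_{g}\right)^{2}\right]$, a Taylor expansion of $\varphi_{\boldsymbol{e}}'$ with the fourth derivative bounded in absolute value by $\Delta_{4}\left(\boldsymbol{e}\right)$, and the final line as a leading-order approximation replacing the minimum curvature by $2\sqrt{6}\sqrt{2d-1}$. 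Your remainder handling via the Lagrange form and your explicit flag that the last line is only a $\lessapprox$ match the paper exactly.
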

\begin{proof}
This lemma is proved by a simple combination of the LSI with a Taylor
expansion of $\varphi_{\boldsymbol{e}}^{'}\left(r\right)$ around
$0$.

First, we observe that $f\left(z|\boldsymbol{e}\right)$ is a strongly
log-concave density with minimal curvature $\min\left[\psi_{f,\boldsymbol{e}}^{''}\left(z\right)\right]$
(from lemma \ref{lem:Strong-log-concavity-of z_g | e_f}). Thus, we
can apply the LSI:
\begin{align*}
KL\left(z_{g},z_{f}|\boldsymbol{e}\right) & \leq\frac{E\left[\left(\psi_{f,\boldsymbol{e}}^{'}\left(z_{g}\right)-\psi_{g}^{'}\left(z_{g}\right)\right)^{2}\right]}{{\displaystyle \min_{z\geq0}\left[\psi_{f,\boldsymbol{e}}^{''}\left(z\right)\right]}}\\
 & \leq\frac{E\left[4z_{g}^{2}\left(\varphi_{\boldsymbol{e}}^{'}\left(z_{g}^{2}\right)-z_{g}^{2}\right)^{2}\right]}{{\displaystyle \min_{z\geq0}\left[\psi_{f,\boldsymbol{e}}^{''}\left(z\right)\right]}}\\
 & \leq\frac{E\left[4r_{g}\left(\varphi_{\boldsymbol{e}}^{'}\left(r_{g}\right)-r_{g}\right)^{2}\right]}{{\displaystyle \min_{z\geq0}\left[\psi_{f,\boldsymbol{e}}^{''}\left(z\right)\right]}}
\end{align*}
where the following lines correspond to simple substitutions: $\psi_{f,\boldsymbol{e}}^{'}\left(z\right)=2z\varphi_{\boldsymbol{e}}\left(z^{2}\right)$
and $r_{g}=z_{g}^{2}$.

We then turn to a Taylor expansion of $\varphi_{\boldsymbol{e}}^{'}\left(r_{g}\right)-r_{g}$
around 0. Critically, the first two terms are 0 because our Gaussian
approximation is the Laplace approximation:
\begin{align*}
\left|\varphi_{\boldsymbol{e}}^{'}\left(r_{g}\right)-r_{g}-\Delta_{3}\left(\boldsymbol{e}\right)\frac{r_{g}^{2}}{2}\right| & \leq\Delta_{4}\left(\boldsymbol{e}\right)\frac{r_{g}^{3}}{3!}\\
\left|\varphi_{\boldsymbol{e}}^{'}\left(r_{g}\right)-r_{g}\right| & \leq\left|\Delta_{3}\left(\boldsymbol{e}\right)\right|\frac{r_{g}^{2}}{2}+\Delta_{4}\left(\boldsymbol{e}\right)\frac{r_{g}^{3}}{3!}\\
\left(\varphi_{\boldsymbol{e}}^{'}\left(r_{g}\right)-r_{g}\right)^{2} & \leq\left(\left|\Delta_{3}\left(\boldsymbol{e}\right)\right|\frac{r_{g}^{2}}{2}+\Delta_{4}\left(\boldsymbol{e}\right)\frac{r_{g}^{3}}{3!}\right)^{2}\\
4r_{g}\left(\varphi_{\boldsymbol{e}}^{'}\left(r_{g}\right)-r_{g}\right)^{2} & \leq r_{g}\left(\left|\Delta_{3}\left(\boldsymbol{e}\right)\right|r_{g}^{2}+\Delta_{4}\left(\boldsymbol{e}\right)\frac{r_{g}^{3}}{3}\right)^{2}
\end{align*}
We can then easily compute the expected value of the last bound, yielding:
\[
E\left[4r_{g}\left(\varphi_{\boldsymbol{e}}^{'}\left(r_{g}\right)-r_{g}\right)^{2}\right]\leq\left[\Delta_{3}\left(\boldsymbol{e}\right)\right]^{2}E\left(r_{g}^{5}\right)+\frac{2}{3}\left|\Delta_{3}\left(\boldsymbol{e}\right)\right|\Delta_{4}\left(\boldsymbol{e}\right)E\left(r_{g}^{6}\right)+\frac{1}{9}\left[\Delta_{4}\left(\boldsymbol{e}\right)\right]^{2}E\left(r_{g}^{7}\right)
\]
which yields the claimed result.
\end{proof}

\subsection{Approximating $\boldsymbol{e}_{f}$}

We can now turn to the task of computing whether $\boldsymbol{e}_{g}$
gives a good approximation of $\boldsymbol{e}_{f}$. This corresponds
to checking whether $\boldsymbol{e}_{f}$ has an almost uniform distribution
over the unit sphere $S^{d-1}$. Equivalently, we will check whether
the log-density $\log\left[\tilde{f}\left(\boldsymbol{e}\right)\right]$
has small oscillations.

The value of $\tilde{f}\left(\boldsymbol{e}\right)$ is found by integrating
out the unnormalized density $\tilde{f}\left(r|\boldsymbol{e}\right)$
(or equivalently $\tilde{f}\left(z|\boldsymbol{e}\right)$). The oscillations
in $\xi\left(\boldsymbol{e}\right)=\log\left[\tilde{f}\left(\boldsymbol{e}\right)\right]$
are caused by the fact that the higher-derivatives of $\varphi_{\boldsymbol{e}}\left(r\right)$
differ depending on the direction $\boldsymbol{e}$: as the following
lemma shows, $\Delta_{3}\left(\boldsymbol{e}\right)$ is the main
influence on $\xi\left(\boldsymbol{e}\right)=\log\left[\tilde{f}\left(\boldsymbol{e}\right)\right]$.
\begin{lem}
Oscillations of $\log\left[\tilde{f}\boldsymbol{e}\right]$.\label{lem:Oscillations-of-xi(e)}

$\log\left[\tilde{f}\left(\boldsymbol{e}\right)\right]$ can be approximated
using the ELBO:
\[
\xi\left(\boldsymbol{e}\right)=\log\left[\tilde{f}\left(\boldsymbol{e}\right)\right]=C+E_{g}\left(\frac{r_{g}^{2}}{2}-\varphi_{\boldsymbol{e}}\left(r_{g}\right)\right)+\epsilon_{1}\left(\boldsymbol{e}\right)
\]
where $\epsilon_{1}\left(\boldsymbol{e}\right)$ is a positive error
equal precisely to $KL\left(z_{g},z_{f}|\boldsymbol{e}\right)$ and
thus upper-bounded by lemma \ref{lem: z_g approx z_f | e}:
\[
\epsilon_{1}\left(\boldsymbol{e}\right)=KL\left(z_{g},z_{f}|\boldsymbol{e}\right)\leq\frac{E\left[4r_{g}\left(\varphi_{\boldsymbol{e}}^{'}\left(r_{g}\right)-r_{g}\right)^{2}\right]}{{\displaystyle \min_{z\geq0}\left[\psi_{f,\boldsymbol{e}}^{''}\left(z\right)\right]}}
\]

We can further perform a Taylor expansion of $\varphi_{\boldsymbol{e}}\left(r_{g}\right)$
to get:
\begin{align*}
\xi\left(\boldsymbol{e}\right) & =C-\frac{\Delta_{3}\left(\boldsymbol{e}\right)E\left(r_{g}^{3}\right)}{6}+\epsilon_{1}\left(\boldsymbol{e}\right)+\epsilon_{2}\left(\boldsymbol{e}\right)\\
\left|\epsilon_{2}\left(\boldsymbol{e}\right)\right| & \leq\frac{\Delta_{4}\left(\boldsymbol{e}\right)E\left(r_{g}^{4}\right)}{4!}
\end{align*}
\end{lem}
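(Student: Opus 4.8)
The plan is to obtain $\xi(\boldsymbol{e})$ through the exact decomposition of a log-normalizing constant into an evidence lower bound plus a KL divergence, and then to control the resulting ELBO with a Taylor expansion of $\varphi_{\boldsymbol{e}}$. Recall the exact identity underlying the ELBO: for the unnormalized joint density $\tilde{f}(z,\boldsymbol{e})$ and any reference density $q(z)$,
\[
\log\int_{z\geq 0}\tilde{f}(z,\boldsymbol{e})\,dz = E_q\left[\log\frac{\tilde{f}(z,\boldsymbol{e})}{q(z)}\right] + KL\left(q,\, f(z\mid\boldsymbol{e})\right),
\]
where $f(z\mid\boldsymbol{e})$ is the normalized conditional. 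I would take $q=g(z)$, the marginal density of $z_g$ from Lemma \ref{lem:Density-of-. z_g e_g}, so that the KL term becomes exactly $KL(z_g, z_f\mid\boldsymbol{e})$, which is the promised $\epsilon_1(\boldsymbol{e})$; its nonnegativity is the usual Gibbs inequality, and its upper bound is precisely Lemma \ref{lem: z_g approx z_f | e}.

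The crucial computation is the ELBO term itself. Using $\log\tilde{f}(z,\boldsymbol{e}) = (2d-1)\log z - \varphi_{\boldsymbol{e}}(z^2)$ up to an $\boldsymbol{e}$-independent constant (from Lemma \ref{lem:Density-of-. z_g e_f}), and $\log g(z) = (2d-1)\log z - z^4/2 - \log[2^{d/2-2}\Gamma(d/2)]$ (from Lemma \ref{lem:Density-of-. z_g e_g}), the two $(2d-1)\log z$ terms cancel exactly. Writing $r_g = z_g^2$, this leaves
\[
E_g\left[\log\frac{\tilde{f}(z,\boldsymbol{e})}{g(z)}\right] = E_g\left[\frac{r_g^2}{2}-\varphi_{\boldsymbol{e}}(r_g)\right] + \text{const},
\]
where the constant does not depend on $\boldsymbol{e}$ and is absorbed into $C$. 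This cancellation is the whole point of the $r=z^2$ change of variable, and it yields the first displayed claim. Finiteness of the expectation is not an issue: $r_g\sim\chi_d$ has all moments, and $\varphi_{\boldsymbol{e}}$ grows at most quartically by the $\Delta_4$ bound.

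For the second claim I would Taylor-expand $\varphi_{\boldsymbol{e}}(r)$ about $r=0$ to third order. We already know $\varphi_{\boldsymbol{e}}(0)=\phi_f(\boldsymbol{\theta}^{\star})$ (constant in $\boldsymbol{e}$), $\varphi_{\boldsymbol{e}}'(0)=0$, $\varphi_{\boldsymbol{e}}''(0)=1$, and $\varphi_{\boldsymbol{e}}^{(3)}(0)=\Delta_3(\boldsymbol{e})$, with Lagrange remainder bounded by $\Delta_4(\boldsymbol{e})r^4/4!$. Substituting into $r^2/2 - \varphi_{\boldsymbol{e}}(r)$, the quadratic terms cancel once more and the constant $\varphi_{\boldsymbol{e}}(0)$ folds into $C$, leaving $-\Delta_3(\boldsymbol{e})r^3/6$ plus the remainder. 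Taking the expectation over $g$ produces the term $-\Delta_3(\boldsymbol{e})E(r_g^3)/6$ and defines $\epsilon_2(\boldsymbol{e})$ as minus the expected remainder, so that $|\epsilon_2(\boldsymbol{e})| \leq \Delta_4(\boldsymbol{e})E(r_g^4)/4!$ by pushing absolute values inside the expectation.

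The argument is essentially bookkeeping once the ELBO identity is in place, so I do not expect a genuine obstacle. The only points requiring care are (i) correctly identifying the variational gap with the \emph{exact} $KL(z_g, z_f\mid\boldsymbol{e})$, so that $\epsilon_1$ inherits both its sign and the bound of Lemma \ref{lem: z_g approx z_f | e}, and (ii) tracking which terms are genuinely independent of $\boldsymbol{e}$ and may therefore be swept into the single constant $C$ — in particular the sphere-area normalization, the Gaussian normalization $\log[2^{d/2-2}\Gamma(d/2)]$, and $\varphi_{\boldsymbol{e}}(0)=\phi_f(\boldsymbol{\theta}^{\star})$.
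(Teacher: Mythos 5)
Your proposal is correct and follows essentially the same route as the paper: the paper likewise writes $\xi(\boldsymbol{e})$ as the log-normalizer of $\tilde f(r|\boldsymbol{e})\propto r^{d-1}\exp(-\varphi_{\boldsymbol{e}}(r))$, applies the exact ELBO decomposition with $g$ as the variational density so that the gap is precisely $\epsilon_{1}(\boldsymbol{e})=KL(z_{g},z_{f}|\boldsymbol{e})$ (bounded by Lemma \ref{lem: z_g approx z_f | e}), and then Taylor-expands $\varphi_{\boldsymbol{e}}$ with the $\Delta_{4}$ Lagrange remainder to isolate $-\Delta_{3}(\boldsymbol{e})E(r_{g}^{3})/6$ and the bound $|\epsilon_{2}(\boldsymbol{e})|\leq\Delta_{4}(\boldsymbol{e})E(r_{g}^{4})/4!$. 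Your bookkeeping of the $\boldsymbol{e}$-independent constants (the $(2d-1)\log z$ cancellation, the normalizations, and $\varphi_{\boldsymbol{e}}(0)=\phi_{f}(\boldsymbol{\theta}^{\star})$) is in fact slightly more explicit than the paper's own write-up.
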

If these derivatives are small on average, then the KL divergence
$KL\left(\boldsymbol{e}_{g},\boldsymbol{e}_{f}\right)$ is small,
as the following lemma shows.
\begin{lem}
$\boldsymbol{e}_{g}\approx\boldsymbol{e}_{f}$. \label{lem: e_g approx e_f}

The KL divergence can be re-expressed as:
\[
KL\left(\boldsymbol{e}_{g},\boldsymbol{e}_{f}\right)=\log\left[E\left(\exp\left[\xi\left(\boldsymbol{e}_{g}\right)-E\left(\xi\left(\boldsymbol{e}_{g}\right)\right)\right]\right)\right]
\]
 and then upper-bounded or approximated using $\text{var}\left[\xi\left(\boldsymbol{e}_{g}\right)\right]$:
\begin{align*}
KL\left(\boldsymbol{e}_{g},\boldsymbol{e}_{f}\right) & \leq\log\left[1+\frac{1}{2}\exp\left(\max_{\boldsymbol{e}}\left(\xi\left(\boldsymbol{e}\right)\right)-E\left(\xi\left(\boldsymbol{e}_{g}\right)\right)\right)\text{var}\left[\xi\left(\boldsymbol{e}_{g}\right)\right]\right]\\
KL\left(\boldsymbol{e}_{g},\boldsymbol{e}_{f}\right) & \approx\frac{1}{2}\text{var}\left[\xi\left(\boldsymbol{e}_{g}\right)\right]\\
 & \approx\frac{1}{2}\left[\frac{E\left(r_{g}^{3}\right)}{6}\right]^{2}\text{var}\left[\Delta_{3}\left(\boldsymbol{e}_{g}\right)\right]
\end{align*}
Another useful upper-bound and approximation is found by separating
the summands in $\xi\left(\boldsymbol{e}\right)$. Noting $\xi_{ELBO}\left(\boldsymbol{e}\right)=E_{g}\left(\frac{r_{g}^{2}}{2}-\varphi_{\boldsymbol{e}}\left(r_{g}\right)\right)$
the ELBO approximation of $\xi\left(\boldsymbol{e}\right)$, we have:
\begin{align*}
KL\left(\boldsymbol{e}_{g},\boldsymbol{e}_{f}\right) & \leq\frac{1}{2}\log\left[E\left[\exp\left(2\xi_{ELBO}\left(\boldsymbol{e}_{g}\right)-2E\left[\xi_{ELBO}\left(\boldsymbol{e}_{g}\right)\right]\right)\right]\right]+\frac{1}{2}\log\left[E\left[\exp\left(2\epsilon_{1}\left(\boldsymbol{e}_{g}\right)-2E\left[\epsilon_{1}\left(\boldsymbol{e}_{g}\right)\right]\right)\right]\right]\\
 & \lessapprox\frac{1}{2}\log\left[E\left[\exp\left(2\xi_{ELBO}\left(\boldsymbol{e}_{g}\right)-2E\left[\xi_{ELBO}\left(\boldsymbol{e}_{g}\right)\right]\right)\right]\right]+\text{var}\left(\epsilon_{1}\left(\boldsymbol{e}_{g}\right)\right)
\end{align*}
We can also use $\xi\left(\boldsymbol{e}\right)=\frac{\Delta_{3}\left(\boldsymbol{e}_{g}\right)E\left(r_{g}^{3}\right)}{6}+\epsilon_{1}\left(\boldsymbol{e}\right)+\epsilon_{2}\left(\boldsymbol{e}\right)$:
\begin{align*}
KL\left(\boldsymbol{e}_{g},\boldsymbol{e}_{f}\right) & \leq\frac{1}{2}\log\left[E\left[\exp\left(2\frac{\Delta_{3}\left(\boldsymbol{e}_{g}\right)E\left(r_{g}^{3}\right)}{6}\right)\right]\right]+\frac{1}{2}\log\left[E\left[\exp\left(2\epsilon_{1}\left(\boldsymbol{e}_{g}\right)+2\epsilon_{2}\left(\boldsymbol{e}_{g}\right)-2E\left[\epsilon_{1}\left(\boldsymbol{e}_{g}\right)+\epsilon_{2}\left(\boldsymbol{e}_{g}\right)\right]\right)\right]\right]\\
 & \lessapprox\left[\frac{E\left(r_{g}^{3}\right)}{6}\right]^{2}\text{var}\left[\Delta_{3}\left(\boldsymbol{e}_{g}\right)\right]+\text{var}\left[\epsilon_{1}\left(\boldsymbol{e}_{g}\right)+\epsilon_{2}\left(\boldsymbol{e}_{g}\right)\right]
\end{align*}
\end{lem}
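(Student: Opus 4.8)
The plan is to recognize $KL\left(\boldsymbol{e}_{g},\boldsymbol{e}_{f}\right)$ as a log-partition (free-energy) functional of $\xi\left(\boldsymbol{e}\right)$ and then control it through the cumulants of $\xi\left(\boldsymbol{e}_{g}\right)$. First I would establish the exact identity. Since $\boldsymbol{e}_{g}$ is uniform on $S^{d-1}$ (Lemma \ref{lem:Density-of-. z_g e_g}), its density $g\left(\boldsymbol{e}\right)$ is the constant $1/\left|S^{d-1}\right|$, whereas $f\left(\boldsymbol{e}\right)=\exp\left(\xi\left(\boldsymbol{e}\right)\right)/Z_{f}$ with $Z_{f}=\int_{S^{d-1}}\exp\left(\xi\left(\boldsymbol{e}\right)\right)d\boldsymbol{e}=\left|S^{d-1}\right|E_{g}\left[\exp\left(\xi\left(\boldsymbol{e}_{g}\right)\right)\right]$. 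Expanding the divergence,
\[
KL\left(\boldsymbol{e}_{g},\boldsymbol{e}_{f}\right)=E_{g}\left[\log\frac{g\left(\boldsymbol{e}_{g}\right)}{f\left(\boldsymbol{e}_{g}\right)}\right]=-\log\left|S^{d-1}\right|-E_{g}\left[\xi\left(\boldsymbol{e}_{g}\right)\right]+\log Z_{f},
\]
the two $\log\left|S^{d-1}\right|$ contributions cancel, leaving $\log E_{g}\left[\exp\left(\xi\left(\boldsymbol{e}_{g}\right)\right)\right]-E_{g}\left[\xi\left(\boldsymbol{e}_{g}\right)\right]=\log E\left[\exp\left(\xi\left(\boldsymbol{e}_{g}\right)-E\left(\xi\left(\boldsymbol{e}_{g}\right)\right)\right)\right]$, which is the claimed exact re-expression.

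Next I would derive the rigorous upper bound. Setting $Y=\xi\left(\boldsymbol{e}_{g}\right)-E\left(\xi\left(\boldsymbol{e}_{g}\right)\right)$, so that $E\left[Y\right]=0$, and $Y_{\max}=\max_{\boldsymbol{e}}\xi\left(\boldsymbol{e}\right)-E\left(\xi\left(\boldsymbol{e}_{g}\right)\right)\geq0$, Taylor's theorem with Lagrange remainder gives $e^{y}=1+y+\tfrac{1}{2}y^{2}e^{\zeta}$ for some $\zeta$ between $0$ and $y$; since $\zeta\leq\max\left(0,y\right)\leq Y_{\max}$ pointwise, we have $e^{\zeta}\leq e^{Y_{\max}}$ and hence $e^{Y}\leq1+Y+\tfrac{1}{2}Y^{2}e^{Y_{\max}}$. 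Taking expectations, using $E\left[Y\right]=0$ and $E\left[Y^{2}\right]=\text{var}\left[\xi\left(\boldsymbol{e}_{g}\right)\right]$, and applying $\log$ to $E\left[e^{Y}\right]$ produces the first displayed inequality exactly.

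For the approximations I would truncate the cumulant expansion: when the oscillations of $\xi$ are small, $\log E\left[e^{Y}\right]=\log\left(1+\tfrac{1}{2}E\left[Y^{2}\right]+O\left(E\left[Y^{3}\right]\right)\right)\approx\tfrac{1}{2}\text{var}\left[\xi\left(\boldsymbol{e}_{g}\right)\right]$. Substituting the leading behaviour $\xi\left(\boldsymbol{e}\right)\approx C-\tfrac{1}{6}\Delta_{3}\left(\boldsymbol{e}\right)E\left(r_{g}^{3}\right)$ from Lemma \ref{lem:Oscillations-of-xi(e)} and pulling the additive constant and the deterministic factor $E\left(r_{g}^{3}\right)/6$ out of the variance yields $\tfrac{1}{2}\left[E\left(r_{g}^{3}\right)/6\right]^{2}\text{var}\left[\Delta_{3}\left(\boldsymbol{e}_{g}\right)\right]$. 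A useful observation is that $\Delta_{3}\left(\boldsymbol{e}\right)=\phi_{f}^{\left(3\right)}\left(\boldsymbol{\theta}^{\star}\right)\left[\Sigma^{1/2}\boldsymbol{e},\Sigma^{1/2}\boldsymbol{e},\Sigma^{1/2}\boldsymbol{e}\right]$ is an odd function of $\boldsymbol{e}$, so $E\left(\Delta_{3}\left(\boldsymbol{e}_{g}\right)\right)=0$ by the antipodal symmetry of the uniform law on $S^{d-1}$, which makes the leading term mean-zero. For the split bounds I would write $\xi=\xi_{ELBO}+\epsilon_{1}$ (respectively $\xi=\tfrac{1}{6}\Delta_{3}E\left(r_{g}^{3}\right)+\epsilon_{1}+\epsilon_{2}+C$), center to mean zero, factor $\exp\left(\xi-E\xi\right)$ into the product of the two centered exponentials, and apply Cauchy-Schwarz $E\left[\exp A\exp B\right]\leq\sqrt{E\left[\exp2A\right]}\sqrt{E\left[\exp2B\right]}$; taking $\log$ splits $KL$ into the two half-log terms, and the $\Delta_{3}$ factor needs no centering precisely because $E\left(\Delta_{3}\left(\boldsymbol{e}_{g}\right)\right)=0$. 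The $\lessapprox$ refinements then follow from the same second-order cumulant expansion applied to each factor (e.g. $\tfrac{1}{2}\log E\left[\exp\left(2\epsilon_{1}-2E\epsilon_{1}\right)\right]\approx\text{var}\left(\epsilon_{1}\left(\boldsymbol{e}_{g}\right)\right)$).

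I expect the main difficulty to be conceptual rather than computational: the exact identity and the $\log\left[1+\cdots\right]$ bound are clean, but every ``$\approx$'' and ``$\lessapprox$'' rests on truncating a cumulant (or exponential) expansion and is therefore valid only in the regime where $\xi\left(\boldsymbol{e}_{g}\right)$ has small oscillations. The real care lies in stating precisely which relations are genuine inequalities (the first two) versus controlled first-order approximations, and in verifying that the neglected remainders are truly higher order in $\Delta_{3}$ and $\Delta_{4}$ — exactly the caveat flagged after the main theorem.
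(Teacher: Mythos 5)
Your proposal is correct and follows essentially the same route as the paper's proof: the same exact identity (obtained there by writing $\int\tilde{f}$ as $E\left(\tilde{f}/g\right)$, here via the constancy of $g\left(\boldsymbol{e}\right)$ on $S^{d-1}$), the same quadratic bound $e^{x}\leq1+x+\frac{x^{2}}{2}e^{M}$ for $x\leq M$ (proved there by a convexity argument, by you via the Lagrange remainder), the same second-order cumulant truncations for every $\approx$, and the same splitting step, since your Cauchy--Schwarz inequality $E\left[e^{A}e^{B}\right]\leq\sqrt{E\left[e^{2A}\right]}\sqrt{E\left[e^{2B}\right]}$ is exactly the paper's convexity of $K\left(\lambda\right)=\log E\left[\exp\left(\lambda A+\left(1-\lambda\right)B\right)\right]$ evaluated at $\lambda=1/2$. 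You even keep the correct sign $\xi-E\left(\xi\right)$ inside the exponential (the paper's proof has a harmless sign typo there) and justify $E\left[\Delta_{3}\left(\boldsymbol{e}_{g}\right)\right]=0$ by antipodal symmetry, just as the paper does.
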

\begin{proof}
Let us start by rewriting the KL divergence to make $\tilde{f}$ and
then $\xi=\log\left[\tilde{f}\right]$ appear:
\begin{align*}
KL\left(\boldsymbol{e}_{g},\boldsymbol{e}_{f}\right) & =E\left(\log\frac{g\left(\boldsymbol{e}_{g}\right)}{f\left(\boldsymbol{e}_{g}\right)}\right)\\
 & =E\left(\log\frac{g\left(\boldsymbol{e}_{g}\right)\left(\int\tilde{f}\right)}{\tilde{f}\left(\boldsymbol{e}_{g}\right)}\right)\\
 & =E\left(\log\frac{g\left(\boldsymbol{e}_{g}\right)}{\tilde{f}\left(\boldsymbol{e}_{g}\right)}\right)+\log\left[\int\tilde{f}\right]\\
 & =E\left(\log\frac{g\left(\boldsymbol{e}_{g}\right)}{\tilde{f}\left(\boldsymbol{e}_{g}\right)}\right)+\log\left[E\left(\frac{\tilde{f}\left(\boldsymbol{e}_{g}\right)}{g\left(\boldsymbol{e}_{g}\right)}\right)\right]\\
 & =\log\left[E\left(\frac{\tilde{f}\left(\boldsymbol{e}_{g}\right)}{g\left(\boldsymbol{e}_{g}\right)}\right)\right]-E\left(\log\frac{\tilde{f}\left(\boldsymbol{e}_{g}\right)}{g\left(\boldsymbol{e}_{g}\right)}\right)\\
 & =\log\left[E\left(\exp\left[\log\frac{\tilde{f}\left(\boldsymbol{e}_{g}\right)}{g\left(\boldsymbol{e}_{g}\right)}\right]\right)\right]-E\left(\log\frac{\tilde{f}\left(\boldsymbol{e}_{g}\right)}{g\left(\boldsymbol{e}_{g}\right)}\right)
\end{align*}
In this final expression, we can further remove $g\left(\boldsymbol{e}_{g}\right)$
which is constant, and bring $E\left(\log\tilde{f}\right)$ into the
first term:
\begin{equation}
KL\left(\boldsymbol{e}_{g},\boldsymbol{e}_{f}\right)=\log\left[E\left(\exp\left[\xi\left(\boldsymbol{e}_{g}\right)-E\left(\xi\left(\boldsymbol{e}_{g}\right)\right)\right]\right)\right]
\end{equation}
we are thus left with a final expression for the KL divergence which
only involves expected values of $\xi\left(\boldsymbol{e}_{g}\right)=\log\left[\tilde{f}\left(\boldsymbol{e}_{g}\right)\right]$.

We now need to upper-bound this KL divergence to prove Lemma \ref{lem: e_g approx e_f}.
We do so by upper-bounding the exponential function. We will use the
following bound which holds for all $x\leq M$:
\[
\exp\left(x\right)\leq1+x+\frac{x^{2}}{2}\exp\left(M\right)
\]
This bound holds because these two functions have the same value and
the same first derivative at $0$, and their second derivatives verify:
\[
\exp\left(x\right)\leq\exp\left(M\right)
\]
Thus, their difference: $1+x+\frac{x^{2}}{2}\exp\left(M\right)-\exp\left(x\right)$,
is a convex function with minimum at 0, from which we deduce the inequality.

Armed with this upper-bound on the exponential function, let us return
to the KL divergence, denoting $\xi\left(\boldsymbol{e}\right)=\log\left[\tilde{f}\left(\boldsymbol{e}\right)\right]$:
\begin{align*}
KL\left(\boldsymbol{e}_{g},\boldsymbol{e}_{f}\right) & =\log\left[E\left(\exp\left[-\xi\left(\boldsymbol{e}_{g}\right)+E\left(\xi\left(\boldsymbol{e}_{g}\right)\right)\right]\right)\right]\\
 & \leq\log\left[1-E\left(\xi\left(\boldsymbol{e}_{g}\right)+E\left(\xi\left(\boldsymbol{e}_{g}\right)\right)\right)+\exp\left(\max_{\boldsymbol{e}}\left(\xi\left(\boldsymbol{e}\right)\right)-E\left(\xi\left(\boldsymbol{e}_{g}\right)\right)\right)\frac{E\left(\left[\xi\left(\boldsymbol{e}_{g}\right)-E\left(\xi\left(\boldsymbol{e}_{g}\right)\right)\right]^{2}\right)}{2}\right]\\
 & \leq\log\left[1+0+\exp\left(\max_{\boldsymbol{e}}\left(\xi\left(\boldsymbol{e}\right)\right)-E\left(\xi\left(\boldsymbol{e}_{g}\right)\right)\right)\frac{E\left(\left[\xi\left(\boldsymbol{e}_{g}\right)-E\left(\xi\left(\boldsymbol{e}_{g}\right)\right)\right]^{2}\right)}{2}\right]\\
 & \leq\log\left[1+\frac{1}{2}\exp\left(\max_{\boldsymbol{e}}\left(\xi\left(\boldsymbol{e}\right)\right)-E\left(\xi\left(\boldsymbol{e}_{g}\right)\right)\right)\text{var}\left[\xi\left(\boldsymbol{e}_{g}\right)\right]\right]
\end{align*}

We could also use a simple approximation of $\exp\left(x\right)$
and $\log\left(1+x\right)$ around 0. We would then get:
\begin{align*}
\exp\left(x\right) & \approx1+x+\frac{x^{2}}{2}\\
\log\left(1+y\right) & \approx y\\
KL\left(\boldsymbol{e}_{g},\boldsymbol{e}_{f}\right) & \approx\log\left[1+\frac{1}{2}\text{var}\left[\xi\left(\boldsymbol{e}_{g}\right)\right]\right]\\
 & \approx\frac{1}{2}\text{var}\left[\xi\left(\boldsymbol{e}_{g}\right)\right]
\end{align*}
Now we will work on splitting $\xi\left(\boldsymbol{e}\right)$. From
Lemma \ref{lem:Density-of-. z_g e_f}, we know that:
\begin{align*}
\xi\left(\boldsymbol{e}\right) & =\log\left[\tilde{f}\left(\boldsymbol{e}\right)\right]=\log\left[\int\tilde{f}\left(r|\boldsymbol{e}\right)dr\right]\\
 & =C+\log\left[\int r^{d-1}\exp\left(-\varphi_{\boldsymbol{e}}\left(r\right)\right)dr\right]
\end{align*}
where the constant $C$ doesn't depend on $\boldsymbol{e}$.

A good approximation of integrals of these form is the Evidence Lower
Bound (ELBO; \citet{murphy2012machine} Chapter 21):
\[
\xi\left(\boldsymbol{e}\right)\approx E_{g}\left(\frac{r_{g}^{2}}{2}-\varphi_{\boldsymbol{e}}\left(r\right)\right)
\]
Like its name indicates, the ELBO lower-bounds $\xi\left(\boldsymbol{e}\right)$:
the difference between the two is precisely equal to $KL\left(r_{g},r_{f}|\boldsymbol{e}\right)$.

We have bounded the KL divergence between $r_{g}$ and $r_{f}|\boldsymbol{e}$.
Thus, we can precisely control the error using the KL divergence upper-bound
of lemma \ref{lem: z_g approx z_f | e}. Thus :
\begin{align*}
\xi\left(\boldsymbol{e}\right) & =C_{2}+E\left(\frac{r_{g}^{2}}{2}-\varphi_{\boldsymbol{e}}\left(r_{g}\right)\right)+\epsilon_{1}\left(\boldsymbol{e}\right)\\
\epsilon_{1}\left(\boldsymbol{e}\right) & =KL\left(r_{g},r_{f}|\boldsymbol{e}\right)\\
0\leq\epsilon_{1}\left(\boldsymbol{e}\right) & \leq\frac{E\left[4r_{g}\left(\varphi_{\boldsymbol{e}}^{'}\left(r_{g}\right)-r_{g}\right)^{2}\right]}{{\displaystyle \min_{z\geq0}\left[\psi_{f,\boldsymbol{e}}^{''}\left(z\right)\right]}}
\end{align*}
Note that the constant $C_{2}$ doesn't depend on $\boldsymbol{e}$
and will vanish when in the KL divergence since we are interested
in $\xi\left(\boldsymbol{e}_{g}\right)-E\left(\xi\left(\boldsymbol{e}_{g}\right)\right)$.

The term that causes the majority of the oscillations of $\xi\left(\boldsymbol{e}\right)$
is $E\left(\frac{r_{g}^{2}}{2}-\varphi_{\boldsymbol{e}}\left(r_{g}\right)\right)$.
Let us now compute the size of this term:
\begin{align*}
\left|\frac{r_{g}^{2}}{2}-\varphi_{\boldsymbol{e}}\left(r_{g}\right)+\Delta_{3}\left(\boldsymbol{e}\right)\frac{r_{g}^{3}}{3!}\right| & \leq\Delta_{4}\left(\boldsymbol{e}\right)\frac{r_{g}^{4}}{4!}\\
\left|E\left(\frac{r_{g}^{2}}{2}-\varphi_{\boldsymbol{e}}\left(r_{g}\right)\right)-\frac{\Delta_{3}\left(\boldsymbol{e}\right)}{3!}E\left(r_{g}^{3}\right)\right| & \leq\frac{\Delta_{4}\left(\boldsymbol{e}\right)}{4!}E\left(r_{g}^{4}\right)
\end{align*}

We can then rewrite $\xi\left(\boldsymbol{e}\right)$ as:
\begin{align*}
\xi\left(\boldsymbol{e}\right) & =C_{2}-\frac{\Delta_{3}\left(\boldsymbol{e}\right)E\left(r_{g}^{3}\right)}{6}+\epsilon_{1}\left(\boldsymbol{e}\right)+\epsilon_{2}\left(\boldsymbol{e}\right)\\
0\leq\epsilon_{1}\left(\boldsymbol{e}\right) & \leq\frac{E\left[4r_{g}\left(\varphi_{\boldsymbol{e}}^{'}\left(r_{g}\right)-r_{g}\right)^{2}\right]}{{\displaystyle \min_{z\geq0}\left[\psi_{f,\boldsymbol{e}}^{''}\left(z\right)\right]}}\\
\left|\epsilon_{2}\left(\boldsymbol{e}\right)\right| & \leq\frac{\Delta_{4}\left(\boldsymbol{e}\right)}{4!}E\left(r_{g}^{4}\right)
\end{align*}

Finally, we can use this decomposition of $\xi\left(\boldsymbol{e}\right)$
into multiple terms when considering the KL divergence. To prove this,
consider the following function:
\[
K\left(\lambda\right)=\log\left[E\left(\exp\left[\lambda A\left(\boldsymbol{e}_{g}\right)+\left(1-\lambda\right)B\left(\boldsymbol{e}_{g}\right)\right]\right)\right]
\]
where $A\left(\boldsymbol{e}_{g}\right)$ and $B\left(\boldsymbol{e}_{g}\right)$
are arbitrary functions. The first derivative of $K$ is:
\begin{align*}
K^{'}\left(\lambda\right) & =\frac{\int g\left(\boldsymbol{e}\right)d\boldsymbol{e}\left[A\left(\boldsymbol{e}\right)-B\left(\boldsymbol{e}\right)\right]\exp\left[\lambda A\left(\boldsymbol{e}\right)+\left(1-\lambda\right)B\left(\boldsymbol{e}\right)\right]}{\int g\left(\boldsymbol{e}\right)d\boldsymbol{e}\exp\left[\lambda A\left(\boldsymbol{e}\right)+\left(1-\lambda\right)B\left(\boldsymbol{e}\right)\right]}\\
 & =E\left[A\left(\boldsymbol{e}\right)-B\left(\boldsymbol{e}\right)|\lambda\right]
\end{align*}
where the expected value is computed against the normalized density
\[
g\left(\boldsymbol{e}|\lambda\right)=\frac{g\left(\boldsymbol{e}\right)\exp\left[\lambda A\left(\boldsymbol{e}\right)+\left(1-\lambda\right)B\left(\boldsymbol{e}\right)\right]}{\int g\left(\boldsymbol{e}\right)d\boldsymbol{e}\exp\left[\lambda A\left(\boldsymbol{e}\right)+\left(1-\lambda\right)B\left(\boldsymbol{e}\right)\right]}
\]
The second derivative of $K$ is:
\begin{align*}
K^{''}\left(\lambda\right) & =E\left[\left(A\left(\boldsymbol{e}\right)-B\left(\boldsymbol{e}\right)\right)^{2}|\lambda\right]-E\left[A\left(\boldsymbol{e}\right)-B\left(\boldsymbol{e}\right)|\lambda\right]^{2}\\
 & =\text{var}\left[A\left(\boldsymbol{e}\right)-B\left(\boldsymbol{e}\right)|\lambda\right]
\end{align*}
Critically: $K^{''}\left(\lambda\right)\geq0$ and $K$ is thus a
convex function.

By the convexity of $K\left(\lambda\right)$, we have:
\[
K\left(0.5\right)\leq K\left(0\right)+K\left(1\right)
\]
Now, take $A\left(\boldsymbol{e}\right)=2\frac{\Delta_{3}\left(\boldsymbol{e}_{g}\right)E\left(r_{g}^{3}\right)}{6}-E\left(2\frac{\Delta_{3}\left(\boldsymbol{e}_{g}\right)E\left(r_{g}^{3}\right)}{6}\right)$
and $B\left(\boldsymbol{e}\right)=2\epsilon_{1}\left(\boldsymbol{e}\right)+2\epsilon_{2}\left(\boldsymbol{e}\right)-2E\left[\epsilon_{1}\left(\boldsymbol{e}\right)+\epsilon_{2}\left(\boldsymbol{e}\right)\right]$.
By combining the symmetry of $\boldsymbol{e}_{g}$ with the asymmetry
of $\Delta_{3}\left(\boldsymbol{e}_{g}\right)$, we have that $E\left(2\frac{\Delta_{3}\left(\boldsymbol{e}_{g}\right)E\left(r_{g}^{3}\right)}{6}\right)=0$.

The values of $K$ are then:
\begin{align*}
K\left(0.5\right) & =\log\left[E\left[\exp\left(\xi\left(\boldsymbol{e}\right)-E\left(\xi\left(\boldsymbol{e}\right)\right)\right)\right]\right]\\
K\left(0\right) & =\log\left[E\left[\exp\left(2\frac{\Delta_{3}\left(\boldsymbol{e}_{g}\right)E\left(r_{g}^{3}\right)}{6}\right)\right]\right]\\
K\left(1\right) & =\log\left[E\left[\exp\left(2\epsilon\left(\boldsymbol{e}\right)+2\epsilon^{'}\left(\boldsymbol{e}\right)-2E\left[\epsilon\left(\boldsymbol{e}\right)+\epsilon^{'}\left(\boldsymbol{e}\right)\right]\right)\right]\right]
\end{align*}
and we find the final inequality:
\[
\frac{1}{2}\log\left[E\left[\exp\left(2\frac{\Delta_{3}\left(\boldsymbol{e}_{g}\right)E\left(r_{g}^{3}\right)}{6}\right)\right]\right]+\frac{1}{2}\log\left[E\left[\exp\left(2\epsilon\left(\boldsymbol{e}_{g}\right)+2\epsilon^{'}\left(\boldsymbol{e}_{g}\right)-2E\left[\epsilon\left(\boldsymbol{e}_{g}\right)+\epsilon^{'}\left(\boldsymbol{e}_{g}\right)\right]\right)\right]\right]
\]
Note that for any $\alpha\in]0,1[$, we could write: 
\[
\xi\left(\boldsymbol{e}\right)-E\left(\xi\left(\boldsymbol{e}\right)\right)=\alpha\frac{1}{\alpha}\frac{\Delta_{3}\left(\boldsymbol{e}_{g}\right)E\left(r_{g}^{3}\right)}{6}+\left(1-\alpha\right)\frac{1}{1-\alpha}\left(\epsilon\left(\boldsymbol{e}_{g}\right)+\epsilon^{'}\left(\boldsymbol{e}_{g}\right)-E\left[\epsilon\left(\boldsymbol{e}_{g}\right)+\epsilon^{'}\left(\boldsymbol{e}_{g}\right)\right]\right)
\]
which can give other useful upper-bounds on the KL divergence.
\end{proof}

\subsection{Approximating $f$}

We are now finally ready to combine all of the preceding lemmas to
state the full form of our theorem:
\begin{enumerate}
\item We have determined the density of the pairs $\left(z_{g},\boldsymbol{e}_{g}\right)$
and $\left(z_{f},\boldsymbol{e}_{f}\right)$.
\item We have computed the KL divergence between $z_{g}$ and the conditional
distribution $z_{f}|\boldsymbol{e}$.
\item We have computed the KL divergence between $\boldsymbol{e}_{g}$ and
$\boldsymbol{e}_{f}$.
\end{enumerate}
Now, the only step that remains consists in combining those results,
which we are able to do because the KL divergence is invariant to
changes of variables.
\begin{thm}
A detailed upper-bound on the KL divergence.\label{thm: Detailed theorem}

The KL divergence between $g$ and $f$ can be decomposed as:
\[
KL\left(g,f\right)=KL\left(\boldsymbol{e}_{f},\boldsymbol{e}_{g}\right)+E\left(KL\left(z_{g},z_{f}|\boldsymbol{e}_{g}\right)\right)
\]
where the second expected value computes the mean of the function
$\boldsymbol{e}\rightarrow KL\left(z_{g},z_{f}|\boldsymbol{e}\right)$
over the random variable $\boldsymbol{e}_{g}$.

The KL divergence can be upper-bounded as:
\begin{align}
KL\left(g,f\right) & \leq\frac{1}{2}\log\left[E\left[\exp\left(2\xi_{ELBO}\left(\boldsymbol{e}_{g}\right)-2E\left[\xi_{ELBO}\left(\boldsymbol{e}_{g}\right)\right]\right)\right]\right]+\frac{1}{2}\log\left[E\left[\exp\left(2\epsilon_{1}\left(\boldsymbol{e}_{g}\right)-2E\left[\epsilon_{1}\left(\boldsymbol{e}_{g}\right)\right]\right)\right]\right]\nonumber \\
 & \ \ \ \ \ \ \ \ +E_{\boldsymbol{e}_{g}}\left[\frac{E_{r_{g}}\left[4r_{g}\left(\varphi_{\boldsymbol{e}_{g}}^{'}\left(r_{g}\right)-r_{g}\right)^{2}\right]}{{\displaystyle \min_{z\geq0}\left[\psi_{f,\boldsymbol{e}_{g}}^{''}\left(z\right)\right]}}\right]
\end{align}
A computable approximation is found by approximating the term involving
$\epsilon_{1}\left(\boldsymbol{e}_{g}\right)$:
\begin{align}
KL\left(g,f\right) & \leq\frac{1}{2}\log\left[E\left[\exp\left(2\xi_{ELBO}\left(\boldsymbol{e}_{g}\right)-2E\left[\xi_{ELBO}\left(\boldsymbol{e}_{g}\right)\right]\right)\right]\right]+E\left[\left(\frac{E_{r_{g}}\left[4r_{g}\left(\varphi_{\boldsymbol{e}_{g}}^{'}\left(r_{g}\right)-r_{g}\right)^{2}\right]}{{\displaystyle \min_{z\geq0}\left[\psi_{f,\boldsymbol{e}_{g}}^{''}\left(z\right)\right]}}\right)^{2}\right]\nonumber \\
 & \ \ \ \ \ \ \ \ +E_{\boldsymbol{e}_{g}}\left[\frac{E_{r_{g}}\left[4r_{g}\left(\varphi_{\boldsymbol{e}_{g}}^{'}\left(r_{g}\right)-r_{g}\right)^{2}\right]}{{\displaystyle \min_{z\geq0}\left[\psi_{f,\boldsymbol{e}_{g}}^{''}\left(z\right)\right]}}\right]
\end{align}
in which all expected values need to be approximated by sampling from
$\boldsymbol{e}_{g}$.

We can also approximate this bound by keeping only the terms that
depend on $\Delta_{3}\left(\boldsymbol{e}\right)$:
\begin{equation}
KL\left(g,f\right)\lessapprox E\left[\Delta_{3}\left(\boldsymbol{e}_{g}\right)^{2}\right]\left(\frac{2}{\sqrt{3}\sqrt{2d-1}}\frac{\Gamma\left[\left(d+5\right)/2\right]}{\Gamma\left[d/2\right]}+\frac{1}{9}\left[\frac{\Gamma\left[\left(d+3\right)/2\right]}{\Gamma\left[d/2\right]}\right]^{2}\right)
\end{equation}
\end{thm}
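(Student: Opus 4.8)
The plan is to assemble the theorem from the lemmas already proved, using the single structural fact that the Kullback--Leibler divergence is invariant under the invertible change of variables $\boldsymbol{\theta}\rightarrow(z,\boldsymbol{e})$. First I would invoke this invariance to replace $KL(g,f)$ by the divergence between the joint laws of $(z_g,\boldsymbol{e}_g)$ and $(z_f,\boldsymbol{e}_f)$, and then apply the chain rule for the KL divergence, factoring the joint law over the direction $\boldsymbol{e}$ and then over the square-root-radius $z$:
\begin{equation}
KL(g,f)=KL(\boldsymbol{e}_g,\boldsymbol{e}_f)+E_{\boldsymbol{e}_g}\Big[KL\big(g(z\mid\boldsymbol{e}),f(z\mid\boldsymbol{e})\big)\Big].
\end{equation}
Because Lemma \ref{lem:Density-of-. z_g e_g} guarantees that $z_g$ and $\boldsymbol{e}_g$ are independent, the conditional $g(z\mid\boldsymbol{e})$ is just the marginal $g(z)$, so the second term collapses to $E_{\boldsymbol{e}_g}[KL(z_g,z_f\mid\boldsymbol{e})]$ and the claimed decomposition follows.

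The exact upper bound is then obtained by substituting the two controlling lemmas term by term: for the marginal contribution I would insert the splitting bound of Lemma \ref{lem: e_g approx e_f}, which already separates $\xi$ into the $\xi_{ELBO}$ term and the nonnegative oscillation $\epsilon_1(\boldsymbol{e})=KL(z_g,z_f\mid\boldsymbol{e})$ via the convexity of $K(\lambda)$, and for the conditional contribution I would insert the LSI bound of Lemma \ref{lem: z_g approx z_f | e}. To reach the computable form I would approximate the intractable factor $\tfrac12\log E[\exp(2\epsilon_1-2E\epsilon_1)]$ by its variance and then crudely bound the variance by the second moment $E[\epsilon_1^2]$; since $\epsilon_1$ is itself bounded by Lemma \ref{lem: z_g approx z_f | e}, this produces exactly the extra squared term appearing in the second displayed inequality, while the $\xi_{ELBO}$ log-moment-generating factor is carried through unchanged.

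The final $\Delta_3$-only approximation comes from pushing both contributions to leading order in the higher derivatives. For the conditional term I would use $E[KL(z_g,z_f\mid\boldsymbol{e})]\lessapprox[\Delta_3(\boldsymbol{e})]^2 E(r_g^5)/(2\sqrt{6}\sqrt{2d-1})$ and average over $\boldsymbol{e}_g$, and for the marginal term $KL(\boldsymbol{e}_g,\boldsymbol{e}_f)\approx\tfrac12[E(r_g^3)/6]^2\,\mathrm{var}[\Delta_3(\boldsymbol{e}_g)]$. The moments of the $\chi_d$ variable $r_g$ follow from $E(r_g^k)=2^{k/2}\Gamma((d+k)/2)/\Gamma(d/2)$: substituting $E(r_g^5)$ and using $2^{5/2}/(2\sqrt{6})=2/\sqrt{3}$ gives the first bracketed term, while substituting $E(r_g^3)$ and using $\tfrac12\cdot 2^3/36=1/9$ gives the second. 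To factor out the common $E[\Delta_3(\boldsymbol{e}_g)^2]$ I would replace $\mathrm{var}[\Delta_3(\boldsymbol{e}_g)]$ by $E[\Delta_3(\boldsymbol{e}_g)^2]$, which is legitimate because $\boldsymbol{e}_g$ is symmetric on $S^{d-1}$ while $\Delta_3(\boldsymbol{e})=\phi_f^{(3)}(\boldsymbol{\theta}^{\star})[\Sigma^{1/2}\boldsymbol{e},\Sigma^{1/2}\boldsymbol{e},\Sigma^{1/2}\boldsymbol{e}]$ is odd in $\boldsymbol{e}$, so its mean vanishes --- the same parity observation used earlier to kill the linear term.

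I expect the genuine obstacle to be not any single computation but the careful bookkeeping of which steps are exact and which are the leading-order $\lessapprox$ approximations. The passage to the $\Delta_3$-only form silently discards the $\Delta_4$ contributions to both lemmas and the residual oscillation $\epsilon_2$, so the bound is only faithful in the regime where the fourth derivative and these residuals are negligible relative to $\Delta_3$; keeping the discarded terms would replace each clean Gamma-function coefficient by the fuller expressions of the preceding lemmas, which is the honest but far less transparent version of the result.
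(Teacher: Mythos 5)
Your proposal is correct and follows essentially the same route as the paper's own proof: the change of variables plus chain-rule decomposition using the independence of $z_{g}$ and $\boldsymbol{e}_{g}$, term-by-term substitution of Lemmas \ref{lem: z_g approx z_f | e} and \ref{lem: e_g approx e_f}, the variance-then-second-moment approximation of the $\epsilon_{1}$ log-moment-generating term, and the identical $\chi_{d}$-moment arithmetic ($E(r_{g}^{3})=2\sqrt{2}\,\Gamma[(d+3)/2]/\Gamma[d/2]$, $E(r_{g}^{5})=4\sqrt{2}\,\Gamma[(d+5)/2]/\Gamma[d/2]$) for the $\Delta_{3}$-only form. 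Your parity justification for replacing $\text{var}\left[\Delta_{3}\left(\boldsymbol{e}_{g}\right)\right]$ by $E\left[\Delta_{3}\left(\boldsymbol{e}_{g}\right)^{2}\right]$ is precisely the symmetry observation the paper invokes (in the proof of Lemma \ref{lem: e_g approx e_f}), and your closing caveat about the discarded $\Delta_{4}$ and $\epsilon_{2}$ contributions matches the paper's own qualification of the approximate bound.
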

\begin{proof}
Let us prove the formula for the decomposition of the KL divergence.
We start from the normal formula for the KL divergence:
\[
KL\left(g,f\right)=\int g\left(\boldsymbol{\theta}\right)\log\left[\frac{g\left(\boldsymbol{\theta}\right)}{f\left(\boldsymbol{\theta}\right)}\right]d\boldsymbol{\theta}
\]
We then perform the change of variable: $\boldsymbol{\theta}\rightarrow\left(z,\boldsymbol{e}\right)$
and get:
\begin{align*}
KL\left(g,f\right) & =\int g\left(z\right)g\left(\boldsymbol{e}\right)\log\left[\frac{g\left(z\right)g\left(\boldsymbol{e}\right)}{f\left(z,\boldsymbol{e}\right)}\right]dzd\boldsymbol{e}\\
 & =\int g\left(z\right)g\left(\boldsymbol{e}\right)\log\left[\frac{g\left(z\right)g\left(\boldsymbol{e}\right)}{f\left(z|\boldsymbol{e}\right)f\left(\boldsymbol{e}\right)}\right]dzd\boldsymbol{e}\\
 & =\int g\left(z\right)g\left(\boldsymbol{e}\right)\left(\log\left[\frac{g\left(\boldsymbol{e}\right)}{f\left(\boldsymbol{e}\right)}\right]+\log\left[\frac{g\left(z\right)}{f\left(z|\boldsymbol{e}\right)}\right]\right)dzd\boldsymbol{e}\\
 & =\left(\int g\left(\boldsymbol{e}\right)\log\left[\frac{g\left(\boldsymbol{e}\right)}{f\left(\boldsymbol{e}\right)}\right]\right)+\int g\left(\boldsymbol{e}\right)\left(\int g\left(z\right)\log\left[\frac{g\left(z\right)}{f\left(z|\boldsymbol{e}\right)}\right]dz\right)d\boldsymbol{e}\\
 & =KL\left(\boldsymbol{e}_{g},\boldsymbol{e}_{f}\right)+\int g\left(\boldsymbol{e}\right)\left(KL\left(z_{g},z_{f}|\boldsymbol{e}\right)\right)d\boldsymbol{e}\\
 & =KL\left(\boldsymbol{e}_{g},\boldsymbol{e}_{f}\right)+E\left[KL\left(z_{g},z_{f}|\boldsymbol{e}_{g}\right)\right]
\end{align*}

We then combine this expression for the KL divergence with Lemmas
\ref{lem: z_g approx z_f | e} and \ref{lem: e_g approx e_f}. Using
the most general approximation form from these theorems, we get:
\begin{align*}
KL\left(g,f\right) & \leq\frac{1}{2}\log\left[E\left[\exp\left(2\xi_{ELBO}\left(\boldsymbol{e}_{g}\right)-2E\left[\xi_{ELBO}\left(\boldsymbol{e}_{g}\right)\right]\right)\right]\right]+\frac{1}{2}\log\left[E\left[\exp\left(2\epsilon_{1}\left(\boldsymbol{e}_{g}\right)-2E\left[\epsilon_{1}\left(\boldsymbol{e}_{g}\right)\right]\right)\right]\right]\\
 & \ \ \ \ \ \ \ \ +E_{\boldsymbol{e}_{g}}\left[\frac{E_{r_{g}}\left[4r_{g}\left(\varphi_{\boldsymbol{e}_{g}}^{'}\left(r_{g}\right)-r_{g}\right)^{2}\right]}{{\displaystyle \min_{z\geq0}\left[\psi_{f,\boldsymbol{e}_{g}}^{''}\left(z\right)\right]}}\right]
\end{align*}
in which $\epsilon_{1}\left(\boldsymbol{e}\right)=KL\left(z_{g},z_{f}|\boldsymbol{e}\right)$
is bounded:
\[
0\leq\epsilon_{1}\left(\boldsymbol{e}\right)\leq\frac{E_{r_{g}}\left[4r_{g}\left(\varphi_{\boldsymbol{e}_{g}}^{'}\left(r_{g}\right)-r_{g}\right)^{2}\right]}{{\displaystyle \min_{z\geq0}\left[\psi_{f,\boldsymbol{e}_{g}}^{''}\left(z\right)\right]}}
\]
The term $\frac{1}{2}\log\left[E\left[\exp\left(2\epsilon_{1}\left(\boldsymbol{e}_{g}\right)-2E\left[\epsilon_{1}\left(\boldsymbol{e}_{g}\right)\right]\right)\right]\right]$
is problematic since we only have an upper-bound on each term $\epsilon_{1}\left(\boldsymbol{e}\right)$.
One solution consists in using the approximation (see the proof of
Lemma \ref{lem: e_g approx e_f} in which we have detailed further
how to approximate this term):
\begin{align*}
\frac{1}{2}\log\left[E\left[\exp\left(2\epsilon\left(\boldsymbol{e}_{g}\right)-2E\left[\epsilon\left(\boldsymbol{e}_{g}\right)\right]\right)\right]\right] & \approx\text{var}\left[\epsilon_{1}\left(\boldsymbol{e}_{g}\right)\right]\\
 & \lessapprox E\left[\left(\epsilon_{1}\left(\boldsymbol{e}_{g}\right)\right)^{2}\right]\\
 & \lessapprox E\left[\left(\frac{E_{r_{g}}\left[4r_{g}\left(\varphi_{\boldsymbol{e}_{g}}^{'}\left(r_{g}\right)-r_{g}\right)^{2}\right]}{{\displaystyle \min_{z\geq0}\left[\psi_{f,\boldsymbol{e}_{g}}^{''}\left(z\right)\right]}}\right)^{2}\right]
\end{align*}
The final formula of the theorem is simply found by combining the
approximate forms of Lemmas \ref{lem: z_g approx z_f | e} and \ref{lem: e_g approx e_f},
yielding:
\[
KL\left(g,f\right)\lessapprox\frac{1}{2}\left[\frac{E\left(r_{g}^{3}\right)}{6}\right]^{2}E\left[\Delta_{3}\left(\boldsymbol{e}_{g}\right)^{2}\right]+\frac{E\left(r_{g}^{5}\right)}{2\sqrt{6}\sqrt{2d-1}}E\left[\Delta_{3}\left(\boldsymbol{e}_{g}\right)^{2}\right]
\]
We then use the expression for the moments of $r_{g}$:
\begin{align*}
E\left(r_{g}^{3}\right) & =2\sqrt{2}\frac{\Gamma\left[\left(d+3\right)/2\right]}{\Gamma\left[d/2\right]}\\
E\left(r_{g}^{5}\right) & =4\sqrt{2}\frac{\Gamma\left[\left(d+5\right)/2\right]}{\Gamma\left[d/2\right]}
\end{align*}
We finally obtain:
\begin{align*}
KL\left(g,f\right) & \lessapprox\frac{1}{2}\left[\frac{2\sqrt{2}\frac{\Gamma\left[\left(d+3\right)/2\right]}{\Gamma\left[d/2\right]}}{6}\right]^{2}E\left[\Delta_{3}\left(\boldsymbol{e}_{g}\right)^{2}\right]+\frac{4\sqrt{2}\frac{\Gamma\left[\left(d+5\right)/2\right]}{\Gamma\left[d/2\right]}}{2\sqrt{6}\sqrt{2d-1}}E\left[\Delta_{3}\left(\boldsymbol{e}_{g}\right)^{2}\right]\\
 & \lessapprox\frac{1}{2}\left[\frac{\sqrt{2}\Gamma\left[\left(d+3\right)/2\right]}{3\Gamma\left[d/2\right]}\right]^{2}E\left[\Delta_{3}\left(\boldsymbol{e}_{g}\right)^{2}\right]+\frac{2}{\sqrt{3}\sqrt{2d-1}}\frac{\Gamma\left[\left(d+5\right)/2\right]}{\Gamma\left[d/2\right]}E\left[\Delta_{3}\left(\boldsymbol{e}_{g}\right)^{2}\right]\\
 & \lessapprox E\left[\Delta_{3}\left(\boldsymbol{e}_{g}\right)^{2}\right]\left(\frac{1}{2}\frac{2}{9}\left[\frac{\Gamma\left[\left(d+3\right)/2\right]}{\Gamma\left[d/2\right]}\right]^{2}+\frac{2}{\sqrt{3}\sqrt{2d-1}}\frac{\Gamma\left[\left(d+5\right)/2\right]}{\Gamma\left[d/2\right]}\right)\\
 & \lessapprox E\left[\Delta_{3}\left(\boldsymbol{e}_{g}\right)^{2}\right]\left(\frac{2}{\sqrt{3}\sqrt{2d-1}}\frac{\Gamma\left[\left(d+5\right)/2\right]}{\Gamma\left[d/2\right]}+\frac{1}{9}\left[\frac{\Gamma\left[\left(d+3\right)/2\right]}{\Gamma\left[d/2\right]}\right]^{2}\right)
\end{align*}
\end{proof}

\section{Details of the logistic regression example\label{sec:Details-of-the logistic regression}}

In order to assess whether the rough approximation of the bound could
prove useful, we have tested it in a simple example: logistic regression.
In this model, the data $\mathcal{D}$ is composed of the $n$ pairs:
$\left(y_{i},\boldsymbol{x}_{i}\right)\in\left\{ -1,1\right\} *\mathbb{R}^{d}$
corresponding to class labels $y_{i}$ and predictors (or covariates)
$\boldsymbol{x}_{i}$. If we choose the prior to be Gaussian (with
mean 0 and covariance matrix $\sigma_{0}^{2}I_{d}$), the posterior
is then:
\begin{equation}
f\left(\boldsymbol{\theta}\right)=\frac{1}{Z}\exp\left(-\frac{1}{2}\sum_{j=1}^{d}\frac{\left(\theta_{i}\right)^{2}}{\sigma_{0}^{2}}\right)\prod_{i=1}^{n}\frac{1}{1+\exp\left[-y_{i}\left(\boldsymbol{\theta}.\boldsymbol{x}_{i}\right)\right]}
\end{equation}
which is log-concave.

In order to test our theorem, we needed to:
\begin{itemize}
\item Compute the Laplace approximation of the posterior, i.e: compute $\boldsymbol{\theta}^{\star}$
and $H\phi_{f}\left(\boldsymbol{\theta}^{\star}\right)$.
\item Compute the third derivative tensor: $\phi_{f}^{\left(3\right)}\left(\boldsymbol{\theta}^{\star}\right)$
and use it to deduce $\Delta_{3}\left(\boldsymbol{e}\right)$:
\[
\Delta_{3}\left(\boldsymbol{e}\right)=\phi_{f}^{\left(3\right)}\left(\boldsymbol{\theta}^{\star}\right)\left[\Sigma^{1/2}\boldsymbol{e},\Sigma^{1/2}\boldsymbol{e},\Sigma^{1/2}\boldsymbol{e}\right]
\]
\item Approximate the real value of the KL divergence 
\[
KL\left(g,f\right)=\int g\left(\boldsymbol{\theta}\right)\log\left[\frac{g\left(\boldsymbol{\theta}\right)Z}{\tilde{f}\left(\boldsymbol{\theta}\right)}\right]d\boldsymbol{\theta}
\]
which requires approximating the normalizing constant $Z$.
\end{itemize}
We computed these values in the following way:
\begin{itemize}
\item The MAP value $\boldsymbol{\theta}^{\star}$ was computed using a
line-search gradient descent.
\item The normalizing constant was approximated using a Markov Chain Monte
Carlo sampling method:
\begin{itemize}
\item Using standard Metropolis-Hastings, we generated $k$ samples from
$f\left(\boldsymbol{\theta}\right)$: $\boldsymbol{\theta}_{i}$.\\
In the examples reported, we generated $10^{7}$ samples and we then
kept one in $1000$ thus giving $k=10^{4}$.
\item Using the samples, we approximated:
\[
\frac{1}{Z}=E\left(\frac{g\left(\boldsymbol{\theta}_{f}\right)}{\tilde{f}\left(\boldsymbol{\theta}_{f}\right)}\right)\approx\frac{1}{k}\sum_{i}\frac{g\left(\boldsymbol{\theta}_{i}\right)}{\tilde{f}\left(\boldsymbol{\theta}_{i}\right)}
\]
\end{itemize}
\item The KL divergence was also computed by sampling:
\begin{itemize}
\item We generated $k_{2}$ samples from the Gaussian density $g\left(\boldsymbol{\theta}\right)$.\\
In the examples reported, $k_{2}=10^{5}$.
\item We used those samples to approximate the KL divergence:
\[
KL\left(g,f\right)=E\left(\log\left[\frac{g\left(\boldsymbol{\theta}_{g}\right)Z}{\tilde{f}\left(\boldsymbol{\theta}_{g}\right)}\right]\right)
\]
in which we reused the approximation of $Z$.
\end{itemize}
\end{itemize}
The data was generated from the logistic regression model itself:
\begin{itemize}
\item First, we picked the covariates $\boldsymbol{x}_{i}$ according to
a Gaussian distribution with mean 0 and variance the identity matrix.
\item Then, we picked the true value of the parameter $\boldsymbol{\theta}_{0}$
from a Gaussian distribution with mean 0 and variance $d^{-1/2}I_{d}$.
This scaling ensures that the values $\boldsymbol{\theta}.\boldsymbol{x}_{i}$
are of order $1$ no matter the dimension $d$.
\item Finally, the label $y_{i}$ was picked according to the logistic density:
\[
\mathbb{P}\left(y_{i}=\pm1\right)=\frac{1}{1+\exp\left[-y_{i}\left(\boldsymbol{\theta}.\boldsymbol{x}_{i}\right)\right]}
\]
\end{itemize}

\end{document}